\pgfplotsset{compat=1.15}
\theoremstyle{definition}
\newtheorem{definition}{Definition}[section]
\newtheorem{assumption}{Assumption}[section]
\newcommand{\dist}{\mathcal{D}}
\newcommand{\distp}{\mathcal{P}}
\newcommand{\distq}{\mathcal{Q}}
\newcommand{\xxspace}{\mathcal{X}}
\newcommand{\yyspace}{\mathcal{Y}}
\newcommand{\zzspace}{\mathcal{Z}}
\newcommand{\aaspace}{\mathcal{A}}
\newcommand{\hhspace}{\mathcal{H}}
\newcommand{\Ypred}{\widehat{Y}}
\newcommand{\Apred}{\widehat{A}}
\newcommand{\err}{\mathrm{Err}}
\newcommand{\RR}{\mathbb{R}}
\newcommand{\Exp}{\mathbb{E}}
\newcommand{\HH}{\mathcal{H}}
\newcommand{\xx}{\mathbf{x}}
\newcommand{\zz}{\mathbf{z}}
\newcommand*\dif{\mathop{}\!\mathrm{d}}
\DeclareMathOperator*{\argmin}{arg\,min}
\newcommand{\defeq}{\vcentcolon=}
\newcommand{\ind}{\mathbb{I}}
\DeclarePairedDelimiterX{\inp}[2]{\langle}{\rangle}{#1, #2}
\newcommand{\kl}{D_{\text{KL}}}
\newcommand{\dtv}{d_{\text{TV}}}
\newcommand{\errgap}{\Delta_{\mathrm{Err}}}
\newcommand{\crossentropy}[3]{\mathrm{CE}_{#1}(#2~\|~#3)}
\newcommand{\mse}[3]{\mathrm{MSE}_{#1}(#2,~#3)}
\newcommand{\ie}{\textit{i}.\textit{e}.}
\newcommand{\eg}{\textit{e}.\textit{g}.}
\definecolor{dkgreen}{rgb}{0,0.6,0}
\definecolor{gray}{rgb}{0.5,0.5,0.5}
\definecolor{mauve}{rgb}{0.58,0,0.82}
\setlist{nolistsep}
\icmltitlerunning{Understanding and Mitigating Accuracy Disparity in Regression}
\begin{document}

\twocolumn[
\icmltitle{Understanding and Mitigating Accuracy Disparity in Regression}



\icmlsetsymbol{equal}{*}

\begin{icmlauthorlist}
\icmlauthor{Jianfeng Chi}{uva}
\icmlauthor{Yuan Tian}{uva}
\icmlauthor{Geoffrey J. Gordon}{cmu}
\icmlauthor{Han Zhao}{uiuc}
\end{icmlauthorlist}

\icmlaffiliation{uva}{Department of Computer Science, University of Virginia}
\icmlaffiliation{cmu}{Machine Learning Department, Carnegie Mellon University}
\icmlaffiliation{uiuc}{Department of Computer Science, University of Illinois at Urbana-Champaign}

\icmlcorrespondingauthor{Jianfeng Chi}{jc6ub@virginia.com}
\icmlcorrespondingauthor{Han Zhao}{hanzhao@illinois.edu}

\icmlkeywords{Algorithmic Fairness, Regression}

\vskip 0.3in]



\printAffiliationsAndNotice{} 

\begin{abstract}
With the widespread deployment of large-scale prediction systems in high-stakes domains, \eg, face recognition, criminal justice, etc., disparity in prediction accuracy between different demographic subgroups has called for fundamental understanding on the source of such disparity and algorithmic intervention to mitigate it. In this paper, we study the accuracy disparity problem in regression. To begin with, we first propose an \emph{error decomposition theorem}, which decomposes the accuracy disparity into the distance between marginal label distributions and the distance between conditional representations, to help explain why such accuracy disparity appears in practice. Motivated by this error decomposition and the general idea of distribution alignment with statistical distances, we then propose an algorithm to reduce this disparity, and analyze its game-theoretic optima of the proposed objective functions. To corroborate our theoretical findings, we also conduct experiments on five benchmark datasets. The experimental results suggest that our proposed algorithms can effectively mitigate accuracy disparity while maintaining the predictive power of the regression models. 
\end{abstract}

\section{Introduction}

Recent progress in machine learning has led to its widespread use in many high-stakes domains, such as criminal justice, healthcare, student loan approval, and hiring. Meanwhile, it has also been widely observed that accuracy disparity could occur inadvertently under various scenarios in practice~\citep{barocas2016big}. For example, errors are inclined to occur for individuals of certain underrepresented demographic groups~\citep{kim2016data}. In other cases, \citet{buolamwini2018gender} showed that notable accuracy disparity exists across different racial and gender demographic subgroups on several real-world image classification systems. Moreover, \citet{bagdasaryan2019differential} found out that a differentially private model even exacerbates such accuracy disparity. Such accuracy disparity across demographic subgroups not only raises concerns in high-stake applications but also can be utilized by malicious parties to cause information leakage~\citep{yaghini2019disparate,zhao2020trade}. 

Despite the ample needs of accuracy parity, most prior work limits its scope to studying the problem in binary classification settings~\citep{hardt2016equality,zafar2017fairness-b,zhao2019inherent,jiang2019wasserstein}. 
Compared to the accuracy disparity problem in classification settings, accuracy disparity\footnote{Technically, accuracy disparity refers to (squared) error difference in our paper. We would like to use accuracy disparity throughout our paper since it is a more commonly used term in fairness problems.} in regression is a more challenging but less studied problem, due to the fact that many existing algorithmic techniques designed for classification cannot be extended in a straightforward way when the target variable is continuous~\citep{zhao2019conditional}.
In a seminal work, \citet{chen2018my} analyzed the impact of data collection on accuracy disparity in general learning models. They provided a descriptive analysis of such parity gaps and advocated for collecting more training examples and introducing more predictive variables. While such a suggestion is feasible in applications where data collection and labeling is cheap, it is not applicable in domains where it is time-consuming, expensive, or even infeasible to collect more data, \eg, in autonomous driving, education, etc. 

\paragraph{Our Contributions}
In this paper, we provide a prescriptive analysis of accuracy disparity and aim at providing algorithmic interventions to reduce the disparity gap between different demographic subgroups in the regression setting. To start with, we first formally characterize why accuracy disparity appears in regression problems by depicting the feasible region of the underlying group-wise errors. Next, we derive an error decomposition theorem that decomposes the accuracy disparity into the distance between marginal label distributions and the distance between conditional representations. 
We also provide a lower bound on the joint error across groups. 
Based on these results, we illustrate why regression models aiming to minimize the global loss will inevitably lead to accuracy disparity if the marginal label distributions or conditional representations differ across groups. See Figure~\ref{fig:geo} for illustration.

Motivated by the error decomposition theorem, we propose two algorithms to reduce accuracy disparity via joint distribution alignment with the total variation distance and the Wasserstein distance, respectively. Furthermore, we analyze the game-theoretic optima of the objective functions and illustrate the principle of our algorithms from a game-theoretic perspective. To corroborate the effectiveness of our proposed algorithms in reducing accuracy disparity, we conduct experiments on five benchmark datasets.
Experimental results suggest that our proposed algorithms help to mitigate accuracy disparity while maintaining the predictive power of the regression models.
We believe our theoretical results contribute to the understanding of why accuracy disparity occurs in machine learning models, and the proposed algorithms provides an alternative for intervention in real-world scenarios where accuracy parity is desired but collecting more data/features is time-consuming or infeasible.



\begin{figure}[tb]
\centering
  \includegraphics[width=0.98\linewidth]{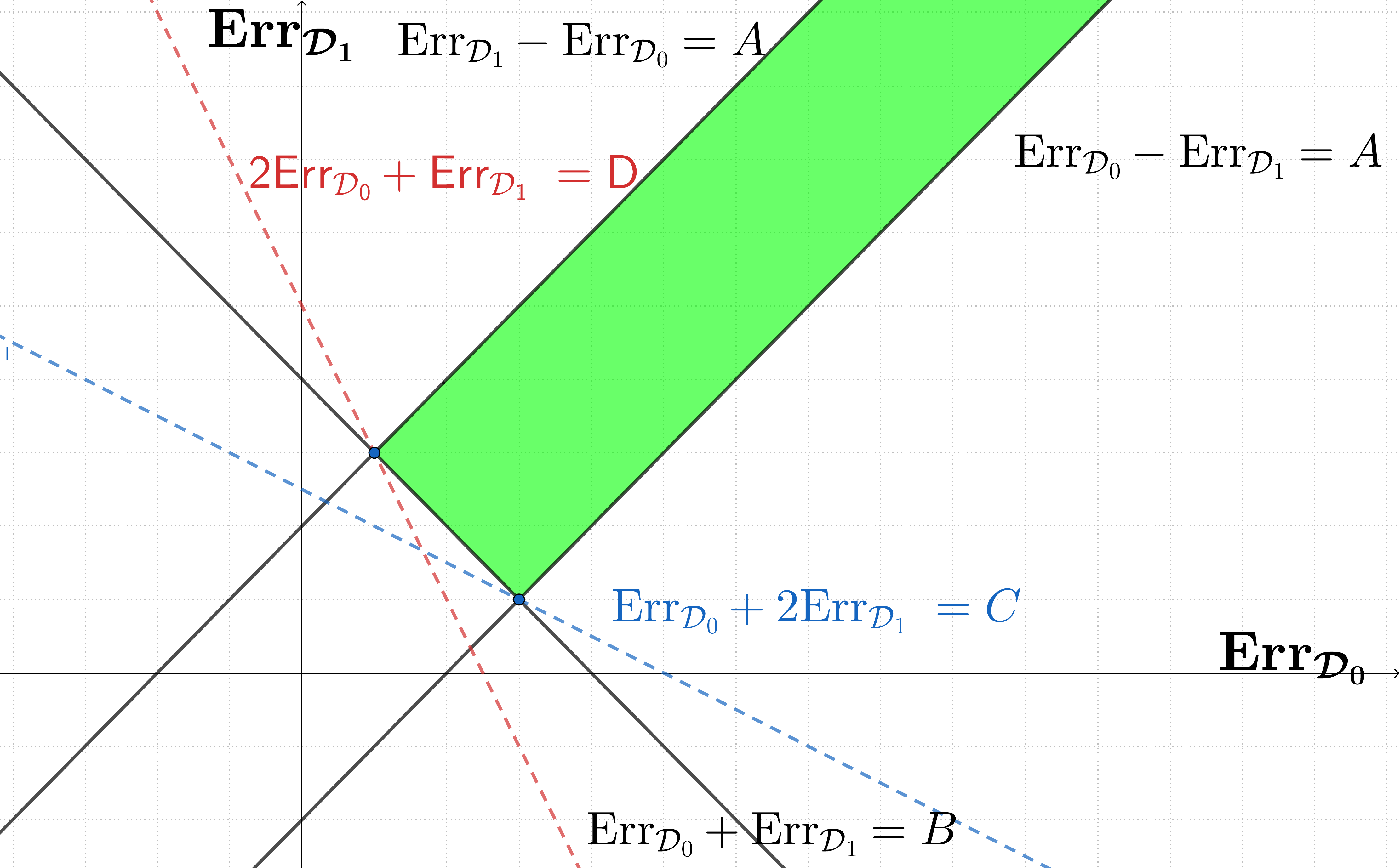}
  \caption{Geometric interpretation of accuracy disparity in regression.
  The green area corresponds to the feasible region of $\err_{\dist_0}$ and $\err_{\dist_1}$ under the hypothesis class $\HH$. For any optimal hypothesis $h$ which is solely designed to minimize the overall error, the best the hypothesis $h$ can do is to intersect with one of the two bottom vertices of the green area, leading to accuracy disparity if the width of the feasible region is nonzero. See section~\ref{geo-int} for more details.
  }
  \label{fig:geo}
\end{figure}

\section{Preliminaries}
\label{sec:prelim}

\paragraph{Notation}
We use $\xxspace\subseteq\RR^d$ and $\yyspace\subseteq\mathbb{R}$ to denote the input and output space. We use $X$ and $Y$ to denote random variables which take values in $\xxspace$ and $\yyspace$, respectively. Lower case letters $\xx$ and $y$ denote the instantiation of $X$ and $Y$. We use $H(X)$ to denote the Shannon entropy of random variable $X$, $H(X \mid Y)$ to denote the conditional entropy of $X$ given $Y$, and $I(X; Y)$ to denote the mutual information between $X$ and $Y$. To simplify the presentation, we use $A\in\{0, 1\}$ as the sensitive attribute, \eg, gender, race, etc. Let $\HH$ be the hypothesis class of regression models. In other words, for $h\in\HH$, $h:\xxspace\to\yyspace$ is a predictor. Note that even if the predictor does not explicitly take the sensitive attribute $A$ as an input variable, the prediction can still be biased due to the correlations with other input variables. In this work we study the stochastic setting where there is a joint distribution $\dist$ over $X, Y$ and $A$ from which the data are sampled. For $a\in\{0, 1\}$ and $y\in \mathbb{R}$, we use $\dist_a$ to denote the conditional distribution of $\dist$ given $A = a$ and $\dist^y$ to denote the conditional distribution of $\dist$ given $Y = y$. For an event $E$, $\dist(E)$ denotes the probability of $E$ under $\dist$. Given a feature transformation function $g:\xxspace\to\zzspace$ that maps instances from the input space $\xxspace$ to feature space $\zzspace$, we define $g_\sharp\dist\defeq \dist\circ g^{-1}$ to be the induced (pushforward) distribution of $\dist$ under $g$, \ie, for any event $E'\subseteq\zzspace$, $g_\sharp\dist(E') \defeq \dist(\{x\in\xxspace\mid g(x)\in E'\})$. We define $(\cdot)_+$ to be $\max\{\cdot, 0\}$. 

For regression problems, given a joint distribution $\dist$, the error of a predictor $h$ under $\dist$ is defined as $\err_\dist(h)\defeq\Exp_\dist[(Y - h(X))^2]$. 
To make the notation more compact, we may drop the subscript $\dist$ when it is clear from the context. Furthermore, we also use $\mse{\dist}{\Ypred}{Y}$ to denote the mean squared loss between the predicted variable $\Ypred = h(X)$ and the true label $Y$ over the joint distribution $\dist$. Similarly, we also use  $\crossentropy{\dist}{A}{\Apred}$ to denote the cross-entropy loss between the predicted variable $\Apred$ and the true label $A$ over the joint distribution $\dist$. Throughout the paper, we make the following standard boundedness assumption:
\begin{assumption}
There exists $M > 0$, such that for any hypothesis $\HH \ni h: \xxspace \to \yyspace$, $\|h\|_{\infty}\leq M$ and $|Y|\leq M$.
\label{ass:bound}
\end{assumption}

\paragraph{Problem Setup}
Our goal is to learn a regression model that is fair in the sense that the errors of the regressor are approximately equal across the groups given by the sensitive attribute $A$. We assume that the sensitive attribute $A$ is only available to the learner during the training phase and is not visible during the inference phase. We would like to point out that there are many other different and important definitions of fairness~\citep{narayanan2018translation} even in the sub-category of group fairness, and our discussion is by no means comprehensive. For example, two frequently used definitions of fairness in the literature are the so-called statistical parity~\citep{dwork2012fairness} and equalized odds~\citep{hardt2016equality}. Nevertheless, throughout this paper we mainly focus accuracy parity as our fairness notion, due to the fact that machine learning systems have been shown to exhibit substantial accuracy disparities between different demographic subgroups~\citep{barocas2016big, kim2016data, buolamwini2018gender}. This observation has already brought huge public attention (\eg, see New York Times, The Verge, and Insurance Journal) and calls for machine learning systems that (at least approximately) satisfy accuracy parity. For example, in a healthcare spending prediction system, stakeholders do not want the prediction error gaps to be too large among different demographic subgroups. Formally, accuracy parity is defined as follows:
\begin{definition} 
Given a joint distribution $\dist$, a predictor $h$ satisfies accuracy parity if $\err_{\dist_0}(h) = \err_{\dist_1}(h)$.
\end{definition}
In practice the exact equality of accuracy between two groups is often hard to ensure, so we define \emph{error gap} to measure how well the model satisfies accuracy parity:
\begin{definition}
Given a joint distribution $\dist$, the error gap of a hypothesis $h$ is $\errgap(h) \defeq |\err_{\dist_0}(h) - \err_{\dist_1}(h)|$.
\end{definition}

By definition, if a model satisfies accuracy parity, $\errgap(h)$ will be zero. 
Next we introduce two distance metrics that will be used in our theoretical analysis and algorithm design:
\begin{itemize}[leftmargin=*]
\item Total variation distance: it measures the largest possible difference between the probabilities that the two probability distributions can assign to the same event $E$. We use $\dtv(\distp, \distq)$ to denote the total variation:
        \begin{equation*}
        \dtv(\distp, \distq)\defeq \sup_{E}|\distp(E) - \distq(E)|.    
        \end{equation*}

\item Wasserstein distance: the Wasserstein distance between two probability distributions is
    \begin{equation}
    \nonumber
    W_1(\distp,\distq) = \sup_{f\in\{f: \| f \|_L \leq 1 \}} \left|\int_{\Omega} f d\distp - \int_{\Omega} f d\distq\right|,
    \end{equation} 
where $\| f \|_L$ is the Lipschitz semi-norm of a real-valued function of $f$ and $\Omega$ is the sample space over which two probability distributions $\distp$ and $\distq$ are defined. By the Kantorovich-Rubinstein duality theorem~\citep{villani2008optimal}, we recover the primal form of the Wasserstein distance, defined as 
    \begin{equation*}
    W_1(\distp, \distq) \defeq \inf_{\gamma \in \Gamma(\distp, \distq)} \int d(X, Y) \dif\gamma(X, Y),    
    \end{equation*}
where $\Gamma(\distp, \distq)$ denotes the collection of all couplings of $\distp$ and $\distq$, and $X$ and $Y$ denote the random variables with law $\distp$ and $\distq$ respectively. 
Throughout this paper we use $L_1$ distance for $d(\cdot, \cdot)$, but extensions to other distances, \eg, $L_2$ distance, is straightforward.
\end{itemize}   

\section{Main Results}
In this section, we first characterize why accuracy disparity arises in regression models. More specifically, given a hypothesis $h\in \hhspace$, we first prove a lower bound of joint errors. Then, we provide an error decomposition theorem which upper bounds the accuracy disparity and decompose it into the distance between marginal label distributions and the distance between conditional representations.
Based on these results, we give a geometric interpretation to visualize the feasible region of $\err_{\dist_0}$ and $\err_{\dist_1}$ and illustrate how error gap arises when learning a hypothesis $h$ that minimizes the global square error. 
Motivated by the error decomposition theorem, we propose two algorithms to reduce accuracy disparity, connect the game-theoretic optima of the objective functions in our algorithms with our theorems, and describe the practical implementations of the algorithms. Due to the space limit, we defer all the detailed proofs to the appendix.

\subsection{Bounds on Conditional Errors and Accuracy Disparity Gap}

Before we provide the prescriptive analysis of the accuracy disparity problem in regression, it is natural to ask whether accuracy parity is achievable in the first place. Hence, we first provide a sufficient condition to achieve accuracy parity in regression.

\begin{restatable}{proposition}{AccParSuff}
Assume both $\Exp_{\dist_a}[Y]$ and $\Exp_{\dist_a}[Y^2]$ are equivalent for any $A=a$, then using a constant predictor ensures accuracy parity in regression.
\label{prop:acc-parity}
\end{restatable}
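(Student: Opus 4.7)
The plan is direct expansion. For a constant predictor $h \equiv c$, the group-conditional error is
\begin{equation*}
\err_{\dist_a}(h) = \Exp_{\dist_a}[(Y-c)^2] = \Exp_{\dist_a}[Y^2] - 2c\,\Exp_{\dist_a}[Y] + c^2.
\end{equation*}
First, I would write out this expansion explicitly for both $a=0$ and $a=1$. The only quantities on the right-hand side that depend on the group $a$ are the first and second moments $\Exp_{\dist_a}[Y]$ and $\Exp_{\dist_a}[Y^2]$; the term $c^2$ is a constant independent of the group.

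Next, I would invoke the hypothesis of the proposition, namely $\Exp_{\dist_0}[Y] = \Exp_{\dist_1}[Y]$ and $\Exp_{\dist_0}[Y^2] = \Exp_{\dist_1}[Y^2]$. Substituting into the expansion shows that the right-hand side takes the same value for $a=0$ and $a=1$, so $\err_{\dist_0}(h) = \err_{\dist_1}(h)$, and by Definition of accuracy parity this completes the proof.

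There is essentially no obstacle here: the statement is a one-line consequence of expanding the square and noting that a constant predictor interacts with the distribution only through the first two moments of $Y$. The main value of the proposition is conceptual, as it serves as a sanity check that under matched marginals of $Y$ across groups, accuracy parity is trivially achievable (albeit by a predictor with no predictive utility), which then motivates the more interesting question of achieving accuracy parity with a nontrivial hypothesis when the marginals of $Y$ differ across groups — the setting addressed by the error decomposition theorem that follows.
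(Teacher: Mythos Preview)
Your proof is correct and is actually the cleaner route. The paper also expands the square, but it does so via a bias--variance style decomposition: it writes
\[
\err_{\dist_a}(h) = \Exp_{\dist_a}\big[(h(X)-\Exp_{\dist_a}[Y])^2\big] + \Exp_{\dist_a}\big[(Y-\Exp_{\dist_a}[Y])^2\big] - 2\,\Exp_{\dist_a}\big[(h(X)-\Exp_{\dist_a}[Y])(Y-\Exp_{\dist_a}[Y])\big],
\]
then argues that the first two terms match across groups (using the moment assumptions) and that the cross term vanishes for a constant predictor. Your direct expansion $(Y-c)^2 = Y^2 - 2cY + c^2$ bypasses this detour entirely and makes the dependence on only the first two moments of $Y$ transparent in one line; the paper's decomposition gains nothing here since $h$ is constant, so there is no genuine bias--variance content to exploit.
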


Proposition~\ref{prop:acc-parity} states if the first two order moments of marginal label distributions are equal across different groups, then using a constant predictor leads to accuracy parity in regression.
Proposition~\ref{prop:acc-parity} is a relaxation of our proposed error decomposition theorem (Theorem~\ref{theorem:upper-bound-x-on-y}) which requires the total variation distance between group-wise marginal label distributions to be zero. 
However, the condition rarely holds in real-world scenarios and it does not provide any insights to algorithm design. 
Next we provide more in-depth analysis to understand why accuracy disparity appears in regression models and provide algorithm interventions to mitigate the problem. 

When we learn a predictor, the prediction function induces $X \overset{h}{\longrightarrow} \Ypred$, where $\Ypred$ is the predicted target variable given by hypothesis $h$. Hence for any distribution $\dist_0$ ($\dist_1$) of $X$, the predictor also induces a distribution $h_\sharp\dist_0$ ($h_\sharp\dist_1$) of $\Ypred$. Recall that the Wasserstein distance is metric, hence the following chain of triangle inequalities holds:
\begin{equation*}
    \small
    \centering
    \begin{aligned}
        W_1(\dist_0(Y), \dist_1(Y)) \leq&~  W_1(\dist_0(Y), h_\sharp\dist_0) + W_1(h_\sharp\dist_0, h_\sharp\dist_1) \\
        &~+ W_1(h_\sharp\dist_1, \dist_1(Y))
    \end{aligned}
    \label{equ:dist_triangle}
\end{equation*}

Intuitively, $W_1(\dist_a(Y), h_\sharp\dist_a)$ measures the distance between the true marginal label distribution and the predicted one when $A=a$. 
This distance is related to the prediction error of function $h$ conditioned on $A=a$:

\begin{restatable}{lemma}{ConditionalError}
Let $\Ypred= h(X)$, then for $a\in\{0, 1\}$, $W_1(\dist_a(Y), h_\sharp\dist_a) \leq \sqrt{\err_{\dist_a} (h)}$.
\label{lemma:w-dist}
\end{restatable}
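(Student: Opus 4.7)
The plan is to go directly through the primal (Kantorovich) form of the Wasserstein distance. The key observation is that under $\dist_a$, the pair $(Y, h(X))$ already provides a coupling whose marginals are exactly $\dist_a(Y)$ and $h_\sharp\dist_a$. So I can upper-bound the infimum over all couplings by plugging this particular one in.

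Concretely, first I would write
\[
W_1(\dist_a(Y), h_\sharp\dist_a) \;=\; \inf_{\gamma \in \Gamma(\dist_a(Y), h_\sharp\dist_a)} \int |y - \hat y|\, \dif\gamma(y, \hat y) \;\leq\; \Exp_{\dist_a}\!\bigl[\,|Y - h(X)|\,\bigr],
\]
using $d(\cdot,\cdot) = |\cdot - \cdot|$ as stated in the preliminaries and the fact that the law of $(Y, h(X))$ under $\dist_a$ lies in $\Gamma(\dist_a(Y), h_\sharp\dist_a)$. Second, I would apply Jensen's inequality (equivalently Cauchy--Schwarz with the constant function $1$) to the convex function $t \mapsto t^2$:
\[
\Exp_{\dist_a}\!\bigl[\,|Y - h(X)|\,\bigr] \;\leq\; \sqrt{\Exp_{\dist_a}\!\bigl[(Y - h(X))^2\bigr]} \;=\; \sqrt{\err_{\dist_a}(h)}.
\]
Chaining the two inequalities yields the claim.

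There is essentially no hard step here: the only substantive move is identifying the joint law of $(Y, h(X))$ as an admissible coupling, after which the rest is Jensen. If one wanted to be fully rigorous about the measurability of the coupling, one could remark that $h$ is (by assumption) a measurable predictor, so $(Y, h(X))$ is a well-defined random vector on the same probability space as $(X, Y, A)$ restricted to $\{A = a\}$, and its marginals are manifestly $\dist_a(Y)$ and $h_\sharp\dist_a$ by definition of the pushforward. The boundedness Assumption~\ref{ass:bound} is not needed for this lemma (it only guarantees that all quantities are finite), which I would note in passing.
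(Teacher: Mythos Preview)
Your proof is correct and is essentially identical to the paper's: both arguments identify the joint law of $(Y, h(X))$ under $\dist_a$ as an admissible coupling to bound the Kantorovich infimum, and then apply Jensen (equivalently, $\Exp[Z]^2 \leq \Exp[Z^2]$) to pass from the $L_1$ to the $L_2$ quantity. The only cosmetic difference is direction---the paper starts from $\err_{\dist_a}(h)$ and squares down to $W_1^2$, whereas you start from $W_1$ and take a square root up---so there is nothing further to compare.
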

Now we can get the following theorem that characterizes the lower bound of joint error on different groups:

\begin{restatable}{theorem}{ErrorLowerBound}
Let $\Ypred= h(X)$ be the predicted variable, then $\err_{\dist_0}(h) + \err_{\dist_1}(h) \geq \frac{1}{2}\big[\big(W_1(\dist_0(Y), \dist_1(Y)) - W_1(h_\sharp\dist_0, h_\sharp\dist_1)\big)_{+}\big]^2$.
 \label{theorem:lower-bound}
\end{restatable}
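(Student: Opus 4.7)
The plan is to combine the triangle-inequality chain already displayed in the excerpt with Lemma~\ref{lemma:w-dist}, and then convert the resulting bound on $\sqrt{\err_{\dist_0}(h)}+\sqrt{\err_{\dist_1}(h)}$ into a bound on $\err_{\dist_0}(h)+\err_{\dist_1}(h)$ via an elementary inequality. No heavy machinery is required; the statement is essentially a bookkeeping consequence of what has been set up.

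First I would rearrange the triangle inequality
\begin{equation*}
W_1(\dist_0(Y),\dist_1(Y)) \le W_1(\dist_0(Y),h_\sharp\dist_0) + W_1(h_\sharp\dist_0,h_\sharp\dist_1) + W_1(h_\sharp\dist_1,\dist_1(Y))
\end{equation*}
to isolate the ``gap'' between the label and prediction distances:
\begin{equation*}
W_1(\dist_0(Y),\dist_1(Y)) - W_1(h_\sharp\dist_0,h_\sharp\dist_1) \le W_1(\dist_0(Y),h_\sharp\dist_0) + W_1(h_\sharp\dist_1,\dist_1(Y)).
\end{equation*}
Applying Lemma~\ref{lemma:w-dist} to each term on the right gives
\begin{equation*}
W_1(\dist_0(Y),\dist_1(Y)) - W_1(h_\sharp\dist_0,h_\sharp\dist_1) \le \sqrt{\err_{\dist_0}(h)} + \sqrt{\err_{\dist_1}(h)}.
\end{equation*}
Because the right-hand side is non-negative, I can replace the left-hand side by its positive part $(\,\cdot\,)_+$ without changing the inequality.

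Next I would square both sides and apply $(a+b)^2 \le 2(a^2+b^2)$ with $a=\sqrt{\err_{\dist_0}(h)}$ and $b=\sqrt{\err_{\dist_1}(h)}$, giving
\begin{equation*}
\bigl[(W_1(\dist_0(Y),\dist_1(Y)) - W_1(h_\sharp\dist_0,h_\sharp\dist_1))_+\bigr]^2 \le 2\bigl(\err_{\dist_0}(h)+\err_{\dist_1}(h)\bigr),
\end{equation*}
which is exactly the claim after dividing by $2$.

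There is no genuine obstacle. The only subtlety worth stating carefully is the use of the positive part: one must note that $\sqrt{\err_{\dist_0}(h)}+\sqrt{\err_{\dist_1}(h)}\ge 0$ so that the inequality survives the replacement by $(\,\cdot\,)_+$, and one must verify that squaring is monotone on non-negative quantities (which is why the $(\cdot)_+$ truncation is needed in the first place — otherwise a negative left-hand side would become spuriously large after squaring). Everything else is a direct chaining of the triangle inequality for $W_1$ with Lemma~\ref{lemma:w-dist}, both of which are already available.
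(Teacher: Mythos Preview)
Your proposal is correct and follows essentially the same route as the paper: triangle inequality for $W_1$, Lemma~\ref{lemma:w-dist} on each conditional term, then $(a+b)^2\le 2(a^2+b^2)$ and a division by $2$. If anything, you are more explicit than the paper about why the $(\cdot)_+$ truncation is needed before squaring.
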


In Theorem~\ref{theorem:lower-bound}, we see that if the difference between marginal label distributions across groups is large, then statistical parity could potentially lead to a large joint error.
Moreover, Theorem~\ref{theorem:lower-bound} could be extended to give a lower bound on the joint error incurred by $h$ as well:
\begin{restatable}{corollary}{WeightedErrorLowerBound}
 Let $\Ypred = h(X)$ and $\alpha = \dist(A=0) \in [0, 1]$, we have $\err_{\dist}(h) \geq \frac{1}{2} \min\{\alpha, 1-\alpha\}\cdot\big[\big(W_1(\dist_0(Y), \dist_1(Y)) - W_1(h_\sharp\dist_0, h_\sharp\dist_1)\big)_{+}\big]^2$.
 \label{corollary:lower-bound}
\end{restatable}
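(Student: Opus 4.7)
The plan is to deduce Corollary~\ref{corollary:lower-bound} directly from Theorem~\ref{theorem:lower-bound} by a simple convex-combination argument. First, I would decompose the population error by conditioning on the sensitive attribute, writing
\[
\err_{\dist}(h) \;=\; \Exp_{\dist}[(Y - h(X))^2] \;=\; \alpha\,\err_{\dist_0}(h) + (1-\alpha)\,\err_{\dist_1}(h),
\]
which is the law of total expectation applied to the nonnegative random variable $(Y-h(X))^2$ together with the definition $\dist_a = \dist(\cdot \mid A=a)$.

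Next, since $\err_{\dist_0}(h)$ and $\err_{\dist_1}(h)$ are both nonnegative, the inequality
\[
\alpha\, \err_{\dist_0}(h) + (1-\alpha)\,\err_{\dist_1}(h) \;\geq\; \min\{\alpha,\, 1-\alpha\}\,\bigl(\err_{\dist_0}(h) + \err_{\dist_1}(h)\bigr)
\]
holds trivially, because $\min\{\alpha,1-\alpha\}$ is at most each of the two weights.

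Finally, I would invoke Theorem~\ref{theorem:lower-bound} to replace the bracketed sum with its lower bound, obtaining
\[
\err_{\dist}(h) \;\geq\; \tfrac{1}{2}\min\{\alpha, 1-\alpha\}\,\bigl[\bigl(W_1(\dist_0(Y), \dist_1(Y)) - W_1(h_\sharp\dist_0, h_\sharp\dist_1)\bigr)_{+}\bigr]^2,
\]
which is exactly the claimed bound. There is no substantive obstacle here: the corollary is essentially a reweighting of the two-group lower bound, and the only step that uses anything beyond definitions is the appeal to Theorem~\ref{theorem:lower-bound} itself. The only subtlety worth noting in the write-up is handling the degenerate cases $\alpha\in\{0,1\}$, in which one of the conditional distributions $\dist_a$ is undefined but the right-hand side vanishes because $\min\{\alpha,1-\alpha\} = 0$, so the inequality holds vacuously.
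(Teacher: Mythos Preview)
Your proposal is correct and follows essentially the same argument as the paper: decompose $\err_{\dist}(h)$ as $\alpha\,\err_{\dist_0}(h)+(1-\alpha)\,\err_{\dist_1}(h)$, lower bound the convex combination by $\min\{\alpha,1-\alpha\}(\err_{\dist_0}(h)+\err_{\dist_1}(h))$, and then invoke Theorem~\ref{theorem:lower-bound}. Your additional remark on the degenerate cases $\alpha\in\{0,1\}$ is a nice touch that the paper omits.
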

Now we upper bound the error gap. We first relate the error gap to marginal label distributions and the predicted distributions conditioned on $Y=y$:

\begin{restatable}{theorem}{UpperXonY}
\label{theorem:upper-bound-x-on-y}
If Assumption~\ref{ass:bound} holds, then for $\forall h\in\HH$, let $\Ypred= h(X)$, the following inequality holds: 
\begin{equation*}
    \begin{aligned}
        \errgap(h) \leq&~ 8M^2 \dtv(\dist_0(Y), \dist_1(Y)) \\
        &~+ 3M \min\{\Exp_{\dist_0}[|\Exp_{\dist_0^y}[\Ypred]-\Exp_{\dist_1^y}[\Ypred]|],\,\\
        &~~\quad\quad\quad\quad\quad\Exp_{\dist_1}[|\Exp_{\dist_0^y}[\Ypred]-\Exp_{\dist_1^y}[\Ypred]|]\}.
    \end{aligned}
\end{equation*}
\end{restatable}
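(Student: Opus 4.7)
The plan is to split the gap $\errgap(h)=|\err_{\dist_0}(h)-\err_{\dist_1}(h)|$ into a ``marginal label mismatch'' piece bounded by $\dtv(\dist_0(Y),\dist_1(Y))$, and a ``conditional prediction mismatch'' piece bounded by the expected absolute difference of conditional mean predictions.

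I would first apply the tower property over $Y$ to write $\err_{\dist_a}(h)=\Exp_{Y\sim\dist_a(Y)}[g_a(Y)]$, where $g_a(y)\defeq\Exp_{\dist_a^y}[(y-\Ypred)^2]$; by Assumption~\ref{ass:bound} one has $\|g_a\|_\infty\le 4M^2$. Telescoping through $\dist_0(Y)$,
\begin{equation*}
\err_{\dist_0}(h)-\err_{\dist_1}(h) \;=\; \Exp_{\dist_0(Y)}\!\bigl[g_0(Y)-g_1(Y)\bigr] \;+\; \bigl(\Exp_{\dist_0(Y)}[g_1(Y)]-\Exp_{\dist_1(Y)}[g_1(Y)]\bigr).
\end{equation*}
The second summand is at most $2\|g_1\|_\infty\dtv(\dist_0(Y),\dist_1(Y))\le 8M^2\dtv(\dist_0(Y),\dist_1(Y))$, matching the first term in the theorem. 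Telescoping through $\dist_1(Y)$ instead replaces the conditional residual by $\Exp_{\dist_1(Y)}[g_0(Y)-g_1(Y)]$ with the same TV bound, and retaining the tighter of the two estimates produces the $\min\{\cdot,\cdot\}$ in the target.

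It then remains to bound $\Exp_{\dist_a(Y)}[|g_0(Y)-g_1(Y)|]$ by $3M\,\Exp_{\dist_a(Y)}[|\mu_0(Y)-\mu_1(Y)|]$, where $\mu_a(y)\defeq\Exp_{\dist_a^y}[\Ypred]$. Expanding $(y-\Ypred)^2=y^2-2y\Ypred+\Ypred^2$ under $\dist_a^y$ and setting $\nu_a(y)\defeq\Exp_{\dist_a^y}[\Ypred^2]$, I obtain $g_0(y)-g_1(y)=-2y\bigl(\mu_0(y)-\mu_1(y)\bigr)+\bigl(\nu_0(y)-\nu_1(y)\bigr)$. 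The linear term is pointwise at most $2M|\mu_0(y)-\mu_1(y)|$ since $|y|\le M$. For the quadratic term I plan to invoke the elementary inequality $|a^2-b^2|\le(|a|+|b|)|a-b|\le 2M|a-b|$, lifted to expectations via a coupling of $\Ypred\mid Y=y,A=0$ and $\Ypred\mid Y=y,A=1$, hopefully producing $|\nu_0(y)-\nu_1(y)|\le M|\mu_0(y)-\mu_1(y)|$, and then integrating to get the $3M$ factor.

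The main obstacle is precisely this last step. Two laws of $\Ypred\mid Y=y,A=a$ may agree in mean yet disagree in second moment, so a pure coupling argument seems only to yield $|\nu_0(y)-\nu_1(y)|\le 2M\,W_1(h_\sharp\dist_0^y,h_\sharp\dist_1^y)$, which is strictly richer than $|\mu_0(y)-\mu_1(y)|$. I would therefore look either for a Lipschitz argument on $t\mapsto t^2$ restricted to $[-M,M]$ that, combined with boundedness, collapses the Wasserstein piece into the absolute mean gap, or for a reduction to the conditional mean regressor $\Exp[\Ypred\mid Y,A]$ that absorbs the residual variance into the already-present $\dtv$ factor. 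Identifying which route the authors take, and verifying that the resulting constant is genuinely $3M$ rather than the slightly looser $4M$ that my naive decomposition suggests, is where most of the real work lies.
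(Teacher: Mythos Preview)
Your decomposition mirrors the paper's almost exactly. The paper also expands $(Y-\Ypred)^2=Y^2-2Y\Ypred+\Ypred^2$, bounds the $Y^2$ piece by $2M^2\dtv$, and telescopes the remaining $\Exp_{\dist_a}[\Ypred^2-2Y\Ypred]$ through one of the two $Y$-marginals; the marginal-mismatch piece contributes the extra $6M^2\dtv$, and what is left is, in your notation, $\Exp_{\dist_1(Y)}[\nu_1(Y)-\nu_0(Y)]-\Exp_{\dist_1(Y)}[2Y(\mu_1(Y)-\mu_0(Y))]$. The $2Y$ term is handled exactly as you propose, yielding $2M\,\Exp_{\dist_1}[|\mu_0(Y)-\mu_1(Y)|]$, and the symmetry argument producing the $\min$ is the same.

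The obstacle you isolated --- controlling $|\nu_0(y)-\nu_1(y)|$ by $M|\mu_0(y)-\mu_1(y)|$ --- is precisely where the paper's argument is unsound. The paper handles the second-moment piece by replacing $h^2(\xx)$ with $\bigl(\sup_{\xx} h(\xx)\bigr)\cdot h(\xx)$ \emph{inside} an integral against the signed measure $\dist_1(\cdot\mid y)-\dist_0(\cdot\mid y)$ and then pulling the constant out, arriving directly at $M\,\Exp_{\dist_1}[|\mu_0(Y)-\mu_1(Y)|]$. But $\bigl|\int h^2\,\dif\sigma\bigr|\leq(\sup|h|)\,\bigl|\int h\,\dif\sigma\bigr|$ is false for signed $\sigma$, and your worry that equal conditional means need not force equal conditional second moments is exactly right. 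In fact it breaks the stated bound: take $Y\equiv 0$, $M=1$, scalar $X$ with $h(X)=X$, where $X\equiv 0$ under $A=0$ and $X\in\{-1,+1\}$ equiprobably under $A=1$. Then $\dtv(\dist_0(Y),\dist_1(Y))=0$ and $\Exp_{\dist_0^0}[\Ypred]=\Exp_{\dist_1^0}[\Ypred]=0$, so the right-hand side of the theorem is $0$, yet $\errgap(h)=|0-1|=1$. So neither your route nor the paper's can legitimately recover the $3M$ constant with only the conditional-mean gap on the right; a correct statement would need an additional term such as $\Exp_{\dist_a}[|\nu_0(Y)-\nu_1(Y)|]$ or a conditional $W_1$/$\dtv$ distance between $h_\sharp\dist_0^Y$ and $h_\sharp\dist_1^Y$.
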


\paragraph{Remark}
We see that the error gap is upper bounded by two terms: the distance between marginal label distributions and the discrepancy between conditional predicted distributions across groups.
Given a dataset, the distance between marginal label distributions is a constant since the marginal label distributions are fixed. 
For the second term, if we can minimize the discrepancy of the conditional predicted distribution across groups, we then have a model that is free of accuracy disparity when the marginal label distributions are well aligned.

\paragraph{Geometric Interpretation}
\label{geo-int}
By Theorem~\ref{theorem:lower-bound} and Theorem~\ref{theorem:upper-bound-x-on-y}, we can visually illustrate how accuracy disparity arises given data distribution and the learned hypothesis that aims to minimize the global square error. In Figure~\ref{fig:geo}, given the hypothesis class $\HH$, we use the line $\err_{\dist_0} + \err_{\dist_1} = B$ to denote the lower bound in Theorem~\ref{theorem:lower-bound} and the two lines $|\err_{\dist_0} - \err_{\dist_1}| = A$ to denote the upper bound in Theorem~\ref{theorem:upper-bound-x-on-y}. These three lines form a feasible region (the green area) of $\err_{\dist_0}$ and $\err_{\dist_1}$ under the hypothesis class $\HH$. For any optimal hypothesis $h$ which is solely designed to minimize the overall error, the best the hypothesis $h$ can do is to intersect with one of the two bottom vertices. For example, the hypotheses (the red dotted line and the blue dotted line) trying to minimize overall error intersect with the two vertices of the region to achieve the smallest $\err_{\dist_0}$-intercept ($\err_{\dist_1}$-intercept), due to the imbalance between these two groups. However, since these two vertices are not on the diagonal of the feasible region, there is no guarantee that the hypothesis can satisfy accuracy parity ($\err_{\dist_0}=\err_{\dist_1}$), unless we can shrink the width of green area to zero.

\subsection{Algorithm Design}
Inspired by Theorem~\ref{theorem:upper-bound-x-on-y}, we can mitigate the error gap by aligning the group distributions via minimizing the distance of the conditional distributions across groups. However, it is intractable to do so explicitly in regression problems since $Y$ can take infinite values on $\mathbb{R}$. Next we will present two algorithms to approximately solve the problem through adversarial representation learning.


Given a Markov chain $X \overset{g}{\longrightarrow} Z \overset{h}{\longrightarrow} \Ypred$, we are interested in learning group-invariant conditional representations so that the discrepancy between the induced conditional distributions $\dist_0^Y (Z=g(X))$ and $\dist_1^Y (Z=g(X))$ is minimized. In this case, the second term of the upper bound in Theorem~\ref{theorem:upper-bound-x-on-y} is minimized. However, it is in general not feasible since $Y$ is a continuous random variable. Instead, we propose to learn the representations of $Z$ to minimize the discrepancy between the joint distributions $\dist_0 (Z=g(X), Y)$ and $\dist_1 (Z=g(X), Y)$. Next, we will show the distances between conditional predicted distributions $\dist_0^Y (Z=g(X))$ and $\dist_1^Y (Z=g(X))$ are minimized when we minimize the joint distributions $\dist_0 (Z=g(X), Y)$ and $\dist_1 (Z=g(X), Y)$ in Theorem~\ref{theorem:optresponse} and Theorem~\ref{theorem:limitwass}.


To proceed, we first consider using the total variation distance to measure the distance between two distributions. In particular, we can choose to learn a binary discriminator $f:Z \times Y \longrightarrow \Apred$ that achieves minimum binary classification error on discriminating between points sampled from two distributions. In practice, we use the cross-entropy loss as a convex surrogate loss.
Formally, we are going to consider the following minimax game between $g$ and $f$:
\begin{equation}
    \min_{f\in \mathcal{F}}\max_{g}\quad \crossentropy{\dist}{A}{f(g(X), Y)}
\label{equ:game}
\end{equation}

Interestingly, for the above equation, the optimal feature transformation $g$ corresponds to the one that induces invariant conditional feature distributions.

\begin{restatable}{theorem}{optresponse}
Consider the minimax game in~\eqref{equ:game}. The equilibrium $(g^*, f^*)$ of the game is attained when 1). $Z = g^*(X)$ is independent of $A$ conditioned on $Y$; 2). $f^*(Z,Y) = \dist(A = 1\mid Y, Z)$. 
\label{theorem:optresponse}
\end{restatable}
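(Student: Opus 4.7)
The plan is to attack the two components of the equilibrium separately using the standard ``optimal discriminator then optimal generator'' decomposition for adversarial losses. I will fix $g$ first and characterise the best response $f^*$ by pointwise minimisation of the cross-entropy integrand; I will then substitute $f^*$ back to obtain a purified objective that depends only on $g$, and argue that this purified objective is maximised exactly when $g(X) \perp A \mid Y$.

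First I would write the cross-entropy explicitly,
\begin{equation*}
\crossentropy{\dist}{A}{f(g(X),Y)} = -\Exp_{\dist}\bigl[A\log f(g(X),Y) + (1-A)\log(1-f(g(X),Y))\bigr],
\end{equation*}
and, using the tower property, rewrite the expectation by conditioning on $(Z,Y)=(g(X),Y)$. For each fixed $(z,y)$, the integrand is of the form $-p\log q - (1-p)\log(1-q)$ where $p = \dist(A=1\mid Z=z,Y=y)$ and $q=f(z,y)$; this is a convex function of $q$ minimised at $q=p$. Assuming $\mathcal{F}$ is rich enough to contain the posterior $(z,y)\mapsto \dist(A=1\mid Z=z,Y=y)$, the best response is therefore $f^*(Z,Y)=\dist(A=1\mid Z,Y)$, which establishes claim (2). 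Plugging this back in, the attained value of the inner minimum equals the conditional Shannon entropy $H(A\mid Z,Y)=H(A\mid g(X),Y)$.

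Next I would handle the outer maximum over $g$. Since conditioning reduces entropy, we have $H(A\mid g(X),Y)\le H(A\mid Y)$, with equality iff $I(A;g(X)\mid Y)=0$, which is precisely the conditional independence $g(X)\perp A\mid Y$. The upper bound $H(A\mid Y)$ is attainable (e.g.\ by any constant $g$, which is trivially independent of $A$ given $Y$), so any $g^*$ satisfying $g^*(X)\perp A\mid Y$ achieves the maximum of the purified objective, giving claim (1). A brief verification that $(g^*,f^*)$ is a saddle point will finish the argument: given $g^*$, the cross-entropy is pointwise minimised by $f^*$ as shown, and given $f^*$, its induced value $H(A\mid g(X),Y)$ is maximised by any $g$ with $g(X)\perp A\mid Y$, so neither player can unilaterally improve.

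The main technical obstacle is the pointwise-to-global step: I need to justify interchanging the outer expectation with the pointwise minimisation in $f$, which requires both a measurability/selection argument and the assumption that the hypothesis class $\mathcal{F}$ realises the Bayes-optimal posterior $\dist(A=1\mid Z,Y)$ (or an $\epsilon$-approximation thereof, in which case the equilibrium is only approximate). A minor but worth-noting subtlety is that the theorem characterises equilibria but not their uniqueness: many different $g$ can satisfy $g(X)\perp A\mid Y$ (a constant map being the degenerate example), so the claim is that \emph{any} equilibrium enforces the conditional independence, not that $g^*$ is unique. Beyond these points, the proof is a clean reduction to information-theoretic identities and requires no nontrivial calculation.
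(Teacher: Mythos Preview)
Your proposal is correct and matches the paper's own argument essentially line for line: the paper also fixes $g$, shows via the decomposition $\crossentropy{\dist}{A}{f}=\Exp_{Z,Y}[\kl(\dist(A\mid Z,Y)\|f(Z,Y))]+H(A\mid Z,Y)$ that the inner minimum is $H(A\mid Z,Y)$ attained at $f^*(Z,Y)=\dist(A=1\mid Z,Y)$, and then uses $H(A\mid Z,Y)=H(A\mid Y)-I(A;Z\mid Y)$ to conclude the outer maximum forces $I(A;Z\mid Y)=0$. Your additional remarks on the richness assumption for $\mathcal{F}$ and the non-uniqueness of $g^*$ are accurate refinements that the paper leaves implicit.
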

Since in the equilibrium of the game $Z$ is independent of $A$ conditioned on $Y$, the optimal $f^*(Z, Y)$ could also be equivalently written as $f^*(Z, Y) = \dist(A = 1\mid Y)$, \ie, the only useful information for the discriminator in the equilibrium is through the external information $Y$. In Theorem~\ref{theorem:optresponse}, the minimum cross-entropy loss that the discriminator (the equilibrium of the game) can achieve is $H(A\mid Z,Y)$ (see Proposition~\ref{prop:limitxent} in Appendix~\ref{app:proof}). For any feature transform $g$, by the basic property of conditional entropy, we have:
\begin{equation}
    \nonumber
    \begin{aligned}
        \min_{f\in\mathcal{F}}\crossentropy{\dist}{A}{f(g(X), Y)} &= H(A\mid Z,Y) \\
        &= H(A\mid Y) - I(A;Z \mid Y). 
    \end{aligned}
\end{equation}

We know that $H(A\mid Y)$ is a constant given the data distribution. The maximization of $g$ in~\eqref{equ:game} is equivalent to the minimization of $\min_{Z  = g(X)}~I(A;Z \mid Y)$, and it follows that the optimal strategy for the transformation $g$ is the one that induces conditionally invariant features, \eg, $I(A;Z \mid Y) = 0$. 
Formally, we arrive at the following minimax problem:
\begin{equation*}
\small
    \min_{h,g} \max_{f\in \mathcal{F}}~\mse{\dist}{h(g(X))}{Y} - \lambda\cdot\crossentropy{\dist}{A}{f(g(X), Y)}
\end{equation*}

In the above formulation, the first term corresponds to the minimization of prediction loss of the target task and the second term is the loss incurred by the adversary $f$. As a whole, the minimax optimization problem expresses a trade-off (controlled by the hyper-parameter $\lambda >0$) between accuracy and accuracy disparity through the representation learning function $g$. 

\paragraph{Wasserstein Variant} Similarly, if we choose to align joint distributions via minimizing Wasserstein distance, the following theorem holds.

\begin{restatable}{theorem}{limitwass}
Let the optimal feature transformation $g^* \defeq \argmin_{g} W_1(\dist_0(g(X),Y), \dist_1(g(X),Y))$, then $\dist_0^Y(Z=g^*(X))=\dist_1^Y(Z=g^*(X))$ almost surely.
\label{theorem:limitwass}
\end{restatable}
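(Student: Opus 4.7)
The plan is to exploit that $W_1$ is a genuine metric on Borel probability measures with finite first moment, and then extract the identity of regular conditional distributions from the induced identity of joint distributions via disintegration.

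First I would use the only metric fact needed, namely $W_1(\mu,\nu) = 0$ iff $\mu = \nu$. The coordinate projection $(z, y) \mapsto y$ is $1$-Lipschitz with respect to the $L_1$ product metric on $\zzspace \times \yyspace$, which gives the universal lower bound
$$W_1(\dist_0(g(X), Y),\, \dist_1(g(X), Y)) \;\geq\; W_1(\dist_0(Y), \dist_1(Y)),$$
valid for every $g$. At the minimizer $g^*$ the setting of the theorem places us in the idealized regime in which this lower bound vanishes (either because the $Y$-marginals coincide or because the feature class is rich enough to match the joint laws along the $Y$-overlap), so that $W_1(\dist_0(g^*(X), Y), \dist_1(g^*(X), Y)) = 0$. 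By the metric property, this forces the joint identity $\dist_0(g^*(X), Y) = \dist_1(g^*(X), Y)$ as Borel measures on $\zzspace \times \yyspace$.

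Next I would disintegrate each joint against its (now common) $Y$-marginal,
$$\dist_a(g^*(X), Y) \;=\; \dist_a(Y) \otimes \dist_a^Y(g^*(X)), \quad a \in \{0, 1\},$$
and invoke the essential uniqueness of regular conditional distributions: equality of the joints forces $\dist_0^y(g^*(X)) = \dist_1^y(g^*(X))$ for almost every $y$, which is precisely the claim.

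The step I expect to be the main obstacle is justifying that the infimum truly equals zero rather than $W_1(\dist_0(Y), \dist_1(Y))$; outside the idealized regime the projection lower bound can be strict, and then the argmin need not match conditional laws. In that harder regime the plan is to pass to an optimal coupling $\gamma^*$ attaining the minimum, use $L_1$ additivity of the cost to decompose $\int(|Z_0-Z_1|+|Y_0-Y_1|)\,d\gamma^*$ and the two lower bounds $\int|Z_0-Z_1|\,d\gamma^*\geq 0$ and $\int|Y_0-Y_1|\,d\gamma^*\geq W_1(\dist_0(Y),\dist_1(Y))$ to force $Z_0 = Z_1$ $\gamma^*$-a.s.\ (with the $Y$-marginal of $\gamma^*$ an optimal coupling of $\dist_0(Y)$ and $\dist_1(Y)$), and then run the disintegration argument along the support of that coupling. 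The delicate bookkeeping in this second case is where essentially all of the work lies; the clean ``$W_1=0$'' case reduces to a two-line disintegration argument.
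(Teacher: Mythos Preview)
Your ``idealized regime'' is a detour. The infimum over $g$ of $W_1(\dist_0(g(X),Y),\dist_1(g(X),Y))$ equals $W_1(\dist_0(Y),\dist_1(Y))$, not zero---your own projection bound gives $\geq$, and taking $g$ constant (or any $g$ with matched conditionals) gives $\leq$---so the joint Wasserstein distance at the minimizer is generically strictly positive and the clean ``$W_1=0\Rightarrow$ equal joints $\Rightarrow$ disintegrate'' route is simply unavailable. All of the content sits in what you call the harder regime.

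The paper's argument is shorter and runs only one direction. It derives the same lower bound $W_1(\dist_0(Y),\dist_1(Y))$ via the \emph{primal} (coupling) form---write the $L_1$ cost as $\|\zz_0-\zz_1\|_1+|y_0-y_1|$, drop the nonnegative $\zz$-term, and marginalize the conditional coupling---whereas you use the dual Lipschitz-projection argument; both are fine and equivalent. The paper then observes that whenever $\dist_0^Y(Z)=\dist_1^Y(Z)$ one can couple so that the $\zz$-cost vanishes and the lower bound is met. That is, the paper proves \emph{sufficiency} (equal conditionals $\Rightarrow$ minimum attained) and from this concludes the theorem; it does not spell out the necessity direction (every minimizer has equal conditionals) that you correctly flag as the substantive step.

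Your harder-regime sketch---pick an optimal coupling $\gamma^*$ at the minimizer $g^*$, use additivity together with $\int|Y_0-Y_1|\,d\gamma^*\geq W_1(\dist_0(Y),\dist_1(Y))$ to force $\int\|Z_0-Z_1\|_1\,d\gamma^*=0$, hence $Z_0=Z_1$ $\gamma^*$-a.s., then disintegrate along the $Y$-coupling---is the right route to that converse and goes beyond what the paper actually writes down. Your caution is also warranted: $Z_0=Z_1$ under a coupling whose $Y$-marginal need not be diagonal does not immediately deliver $\dist_0^y(Z)=\dist_1^y(Z)$ for a.e.\ $y$, and without a richness assumption on the class of feature maps one cannot rule out alternative minimizers. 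For the paper's algorithmic purposes, though, the sufficiency direction is what justifies the method, and that part both you and the paper have cleanly.
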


One notable advantage of using the Wasserstein distance instead of the TV distance is that, the Wasserstein distance is a continuous functional of both the feature map $g$ as well as the discriminator $f$~\citep{arjovsky2017wasserstein}. Furthermore, if both $g$ and $f$ are continuous functions of their corresponding model parameters, which is the case for models we are going to use in experiments, the objective function will be continuous in both model parameters. This property of the Wasserstein distance makes it more favorable from an optimization perspective. Using the dual formulation, equivalently, we can learn a Lipschitz function $f:Z \times Y \to \mathbb{R}$ as a witness function: 
\begin{equation*}
    \begin{aligned}
    \min_{h,g,Z_0\sim g_\sharp\dist_0, Z_1\sim g_\sharp\dist_1} \max_{f: \|f\|_L \leq 1}&\mse{\dist}{h(g(X))}{Y} \\
    &+ \lambda\cdot \big| f(Z_0, Y) - f(Z_1, Y) \big|.
    \end{aligned}
\end{equation*}

\paragraph{Game-Theoretic Interpretation}
We provide a game-theoretic interpretation of our algorithms in Figure~\ref{fig:game} to make our algorithms easier to follow. 

\begin{figure}[tb]
\centering
  \centering
  \includegraphics[width=\linewidth]{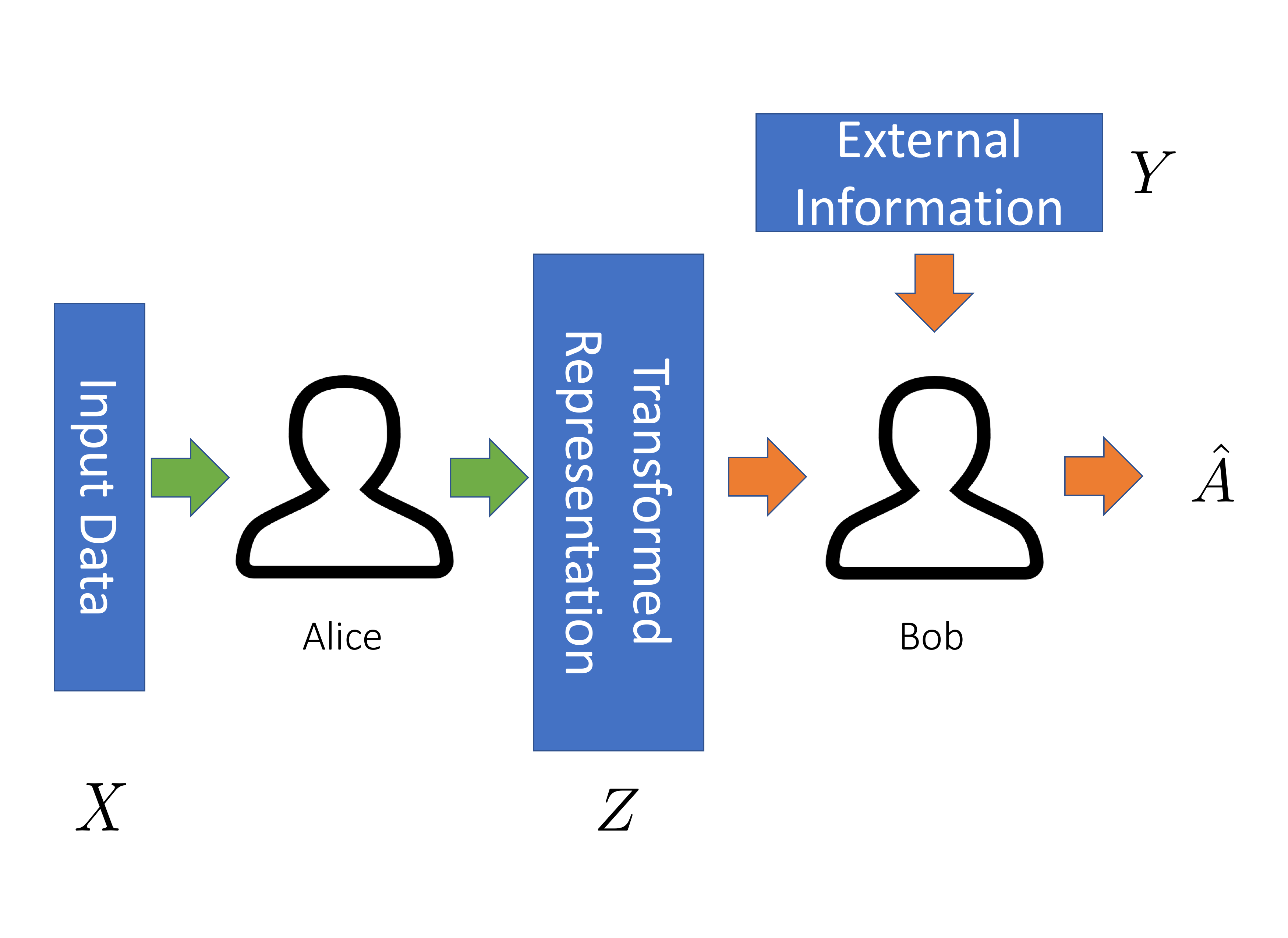}
\caption{The game-theoretic illustration of our algorithms. Bob's goal is to guess the group membership $A$ of each feature $Z$ sent by Alice with the corresponding labels $Y$ as the external information, while Alice's goal is to find a transformation from $X$ to $Z$ to confuse Bob.
}
\label{fig:game}
\end{figure}

As illustrated in Figure~\ref{fig:game}, consider Alice (encoder) and Bob (discriminator) participate a two-player game: upon receiving a set of inputs $X$, Alice applies a transformation to the inputs to generate the corresponding features $Z$ and then sends them to Bob. Besides the features sent by Alice, Bob also has access to the external information $Y$, which corresponds to the corresponding labels for the set of features sent by Alice. Once having both the features $Z$ and the corresponding labels $Y$ from external resources, Bob's goal is to guess the group membership $A$ of each feature sent by Alice, and to maximize his correctness as much as possible. On the other hand, Alice's goal is to compete with Bob, \ie, to find a transformation to confuse Bob as much as she can. Different from the traditional game without external information, here due to the external information $Y$ Bob has access to, Alice cannot hope to fully fool Bob, since Bob can gain some insights about the group membership $A$ of features from the external label information anyway. Nevertheless, Theorem~\ref{theorem:optresponse} and Theorem~\ref{theorem:limitwass} both state that when Bob uses a binary discriminator or a Wasstertein discriminator to learn $A$, the best Alice could do is to to learn a transformation $g$ so that the transformed representation $Z$ is insensitive to the values of A conditioned on any values of $Y=y$.

\begin{figure*}[!ht]
\centering
\begin{subfigure}[b]{.32\linewidth}
  \centering
  \includegraphics[width=\linewidth]{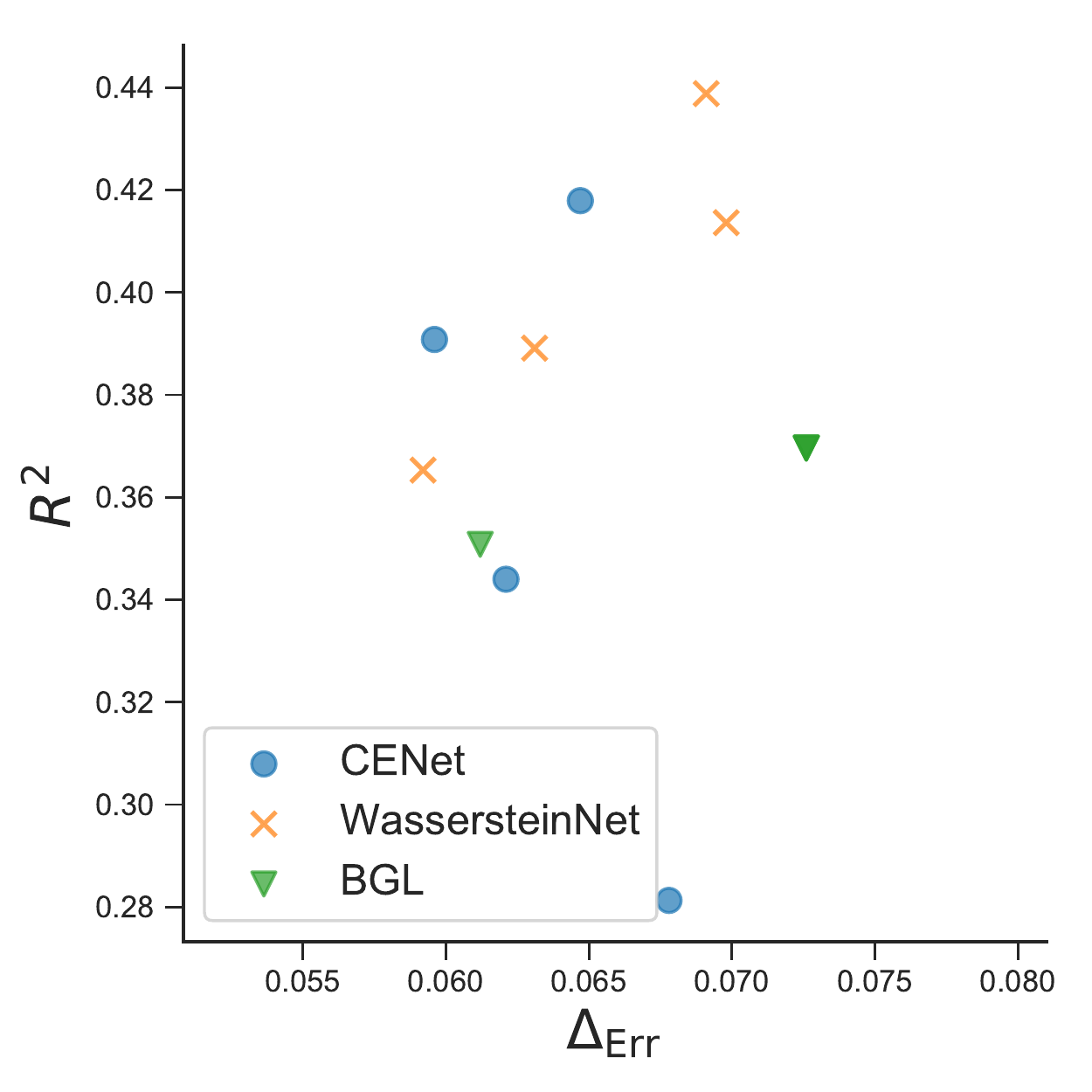}
    \caption{Adult}
  \label{fig:result-adult}
\end{subfigure}
~
\begin{subfigure}[b]{.32\linewidth}
  \centering
  \includegraphics[width=\linewidth]{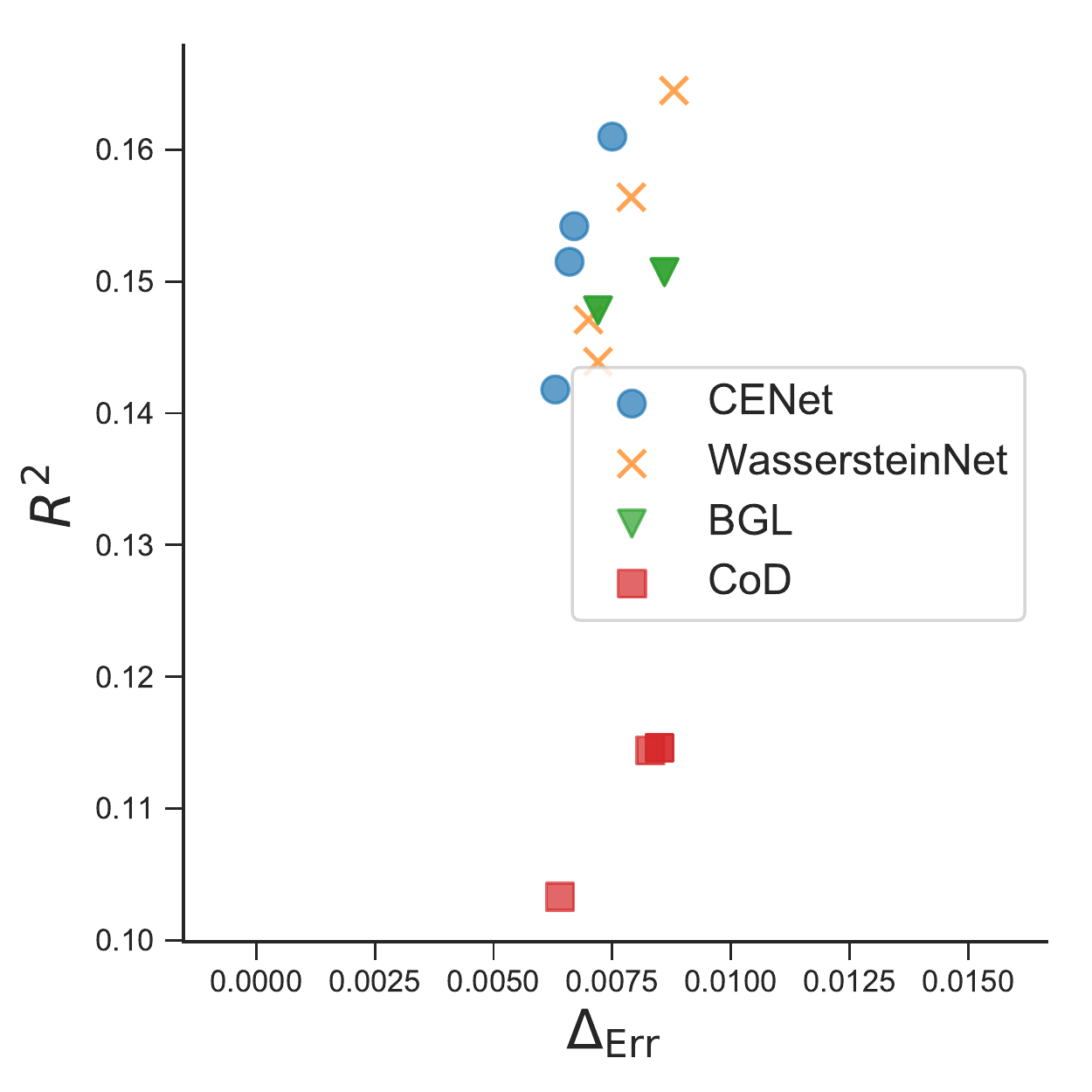}
  \caption{COMPAS}
  \label{fig:result-compas}
\end{subfigure}
~
\begin{subfigure}[b]{.32\linewidth}
  \centering
  \includegraphics[width=\linewidth]{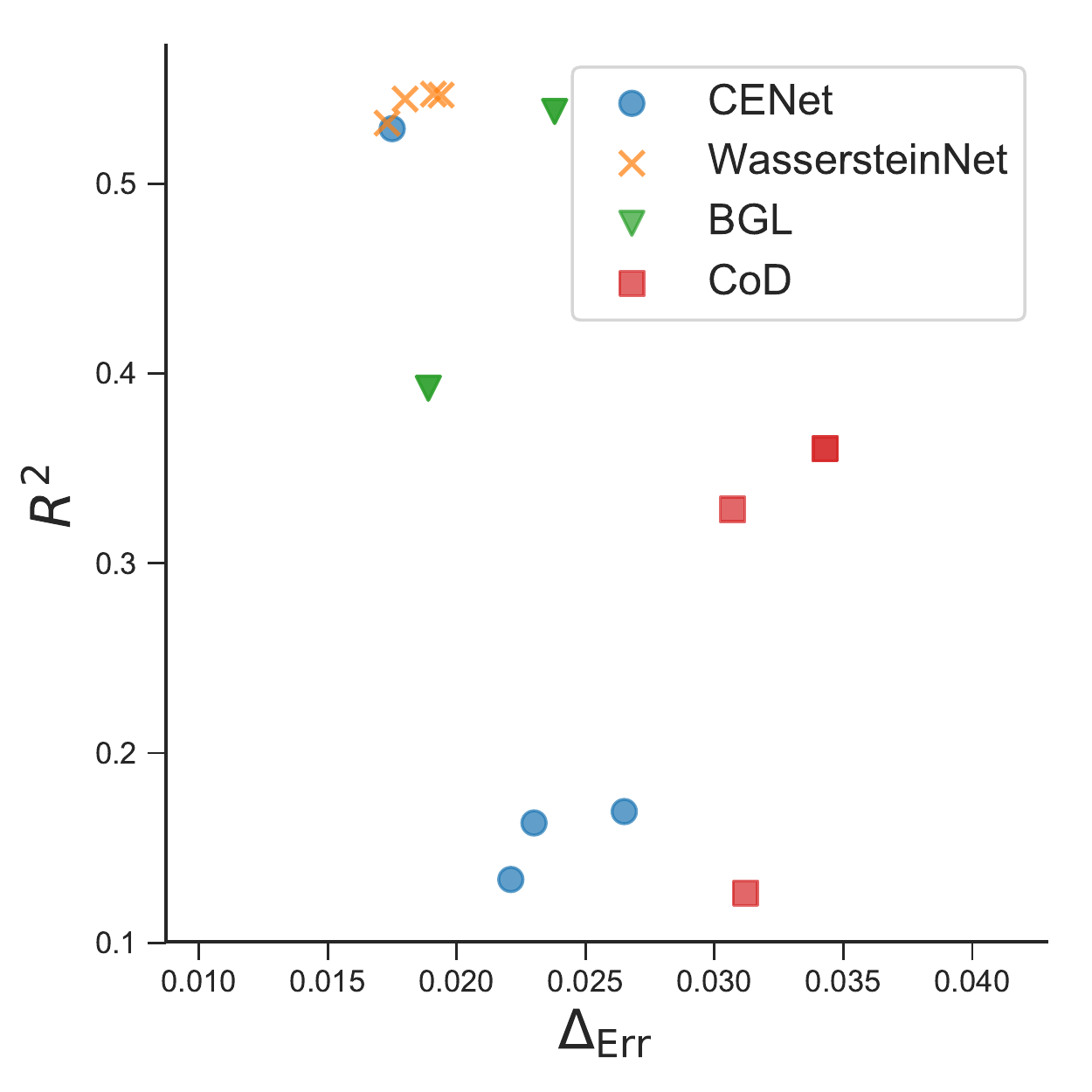}
  \caption{Crime}
  \label{fig:result-crime}
\end{subfigure}
~
\begin{subfigure}[b]{.32\linewidth}
  \centering
  \includegraphics[width=\linewidth]{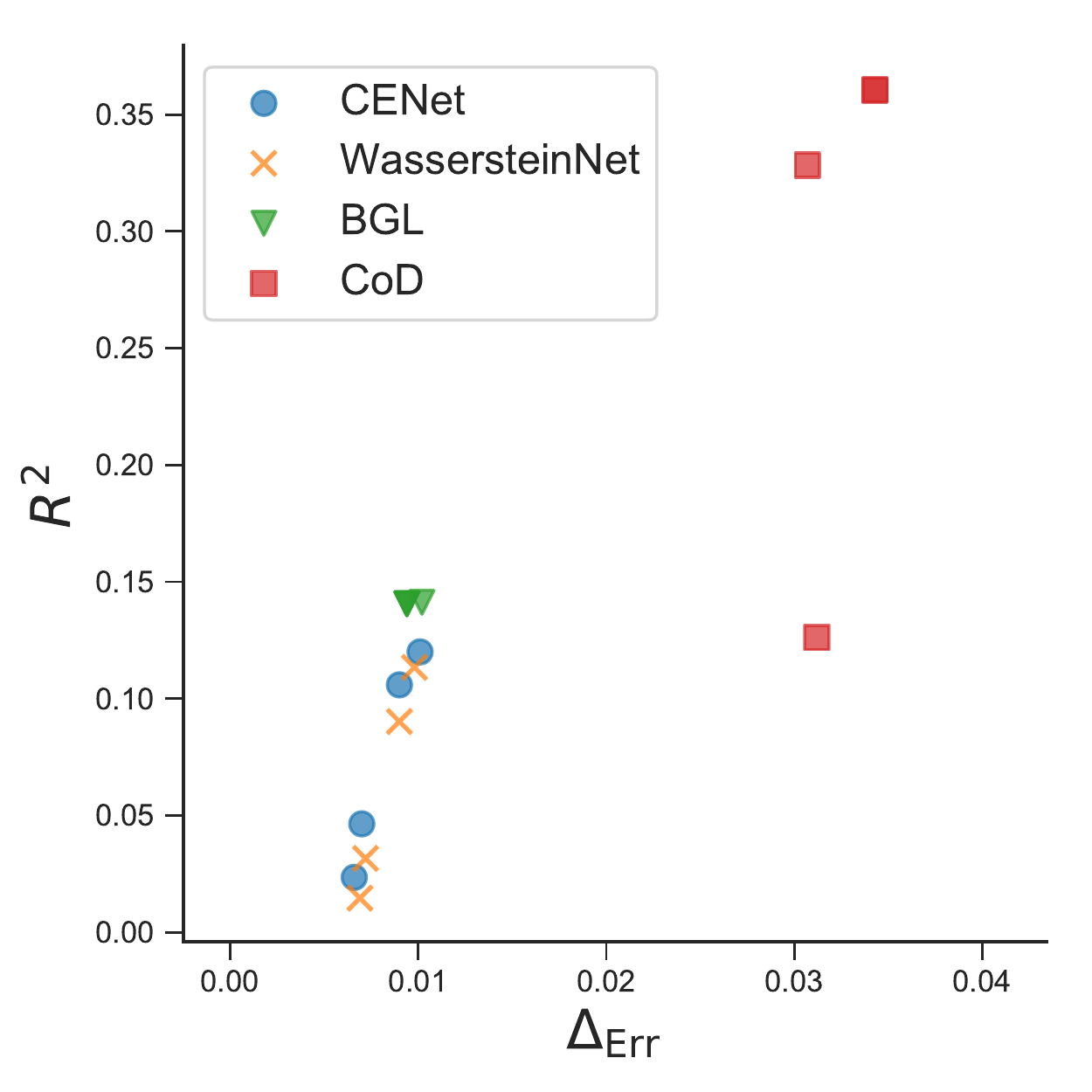}
  \caption{Law}
  \label{fig:result-law}
\end{subfigure}
~
\begin{subfigure}[b]{.32\linewidth}
  \centering
  \includegraphics[width=\linewidth]{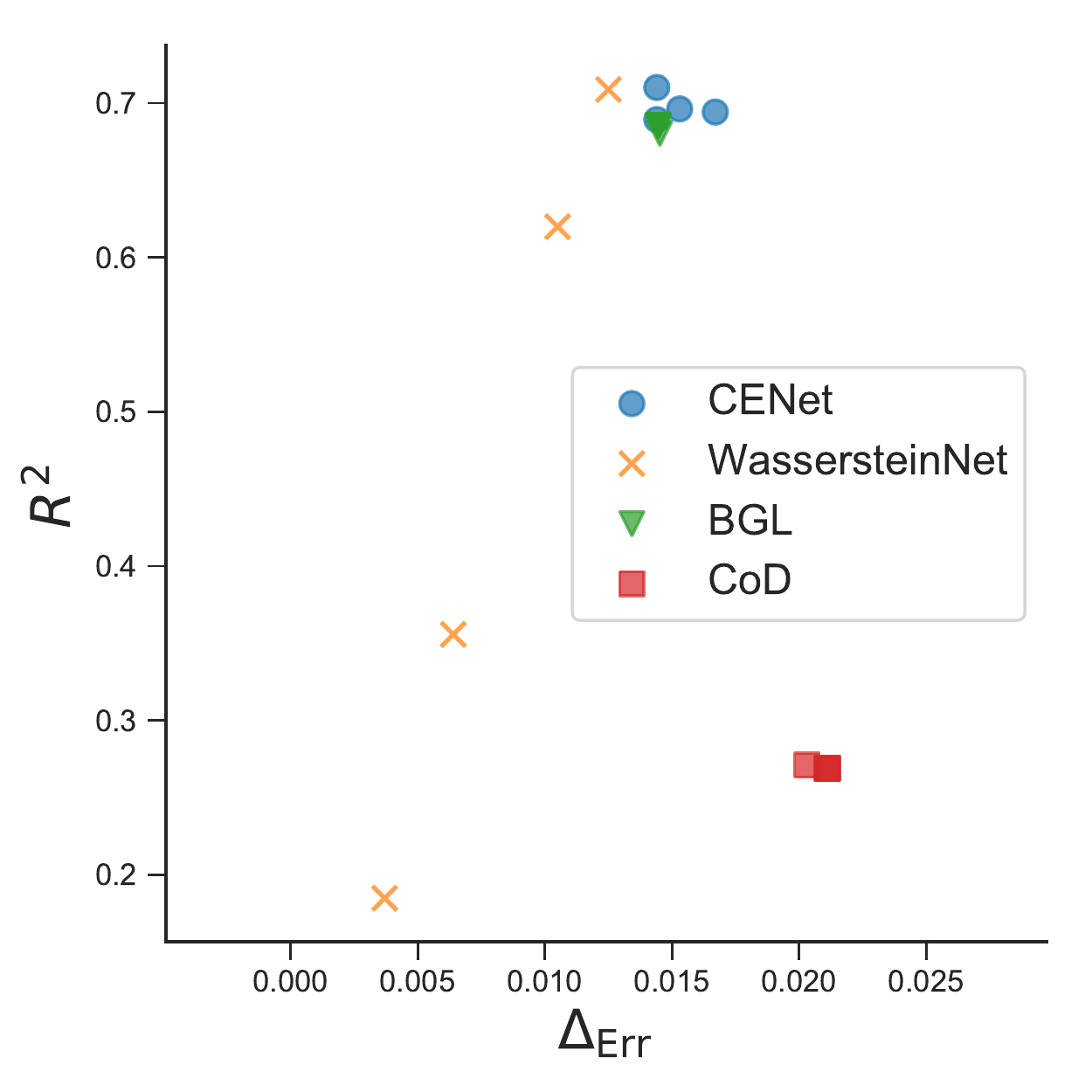}
  \caption{Insurance}
  \label{fig:result-insurance}
\end{subfigure}
\caption{Overall results: $R^2$ regression scores and error gaps of different methods in five datasets. Our goal is to achieve high $R^2$ scores with small error gap values (\ie, the most desirable points are located in the upper-left corner). 
Our proposed methods achieve the best trade-offs in Adult, COMPAS, Crime and Insurance datasets.}
  \label{fig:overall-results}
\end{figure*}

\begin{figure*}[!ht]
\centering
\begin{subfigure}[b]{.32\linewidth}
  \centering
  \includegraphics[width=\linewidth]{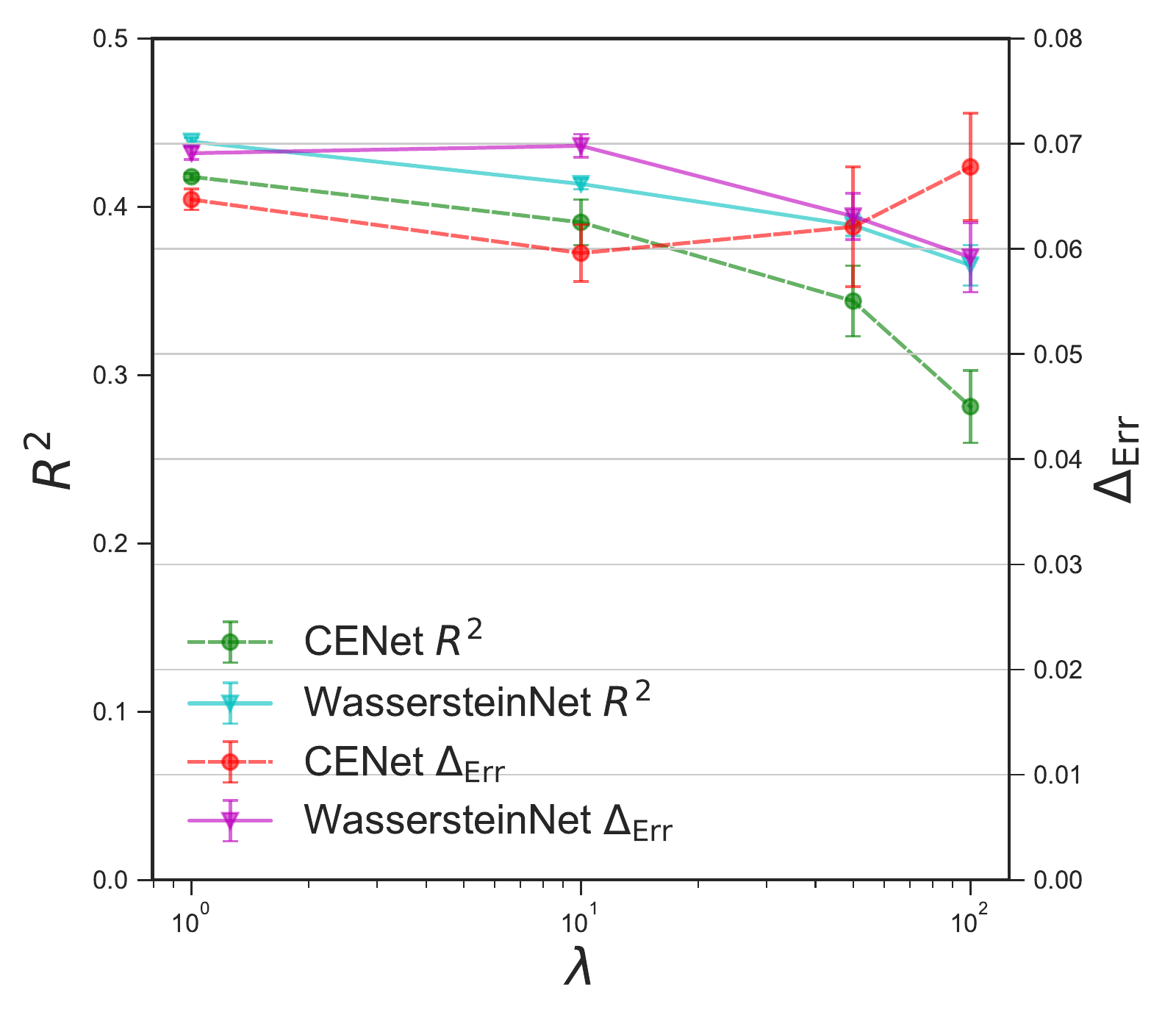}
    \caption{Adult}
  \label{fig:lambda-adult}
\end{subfigure}
~
\begin{subfigure}[b]{.32\linewidth}
  \centering
  \includegraphics[width=\linewidth]{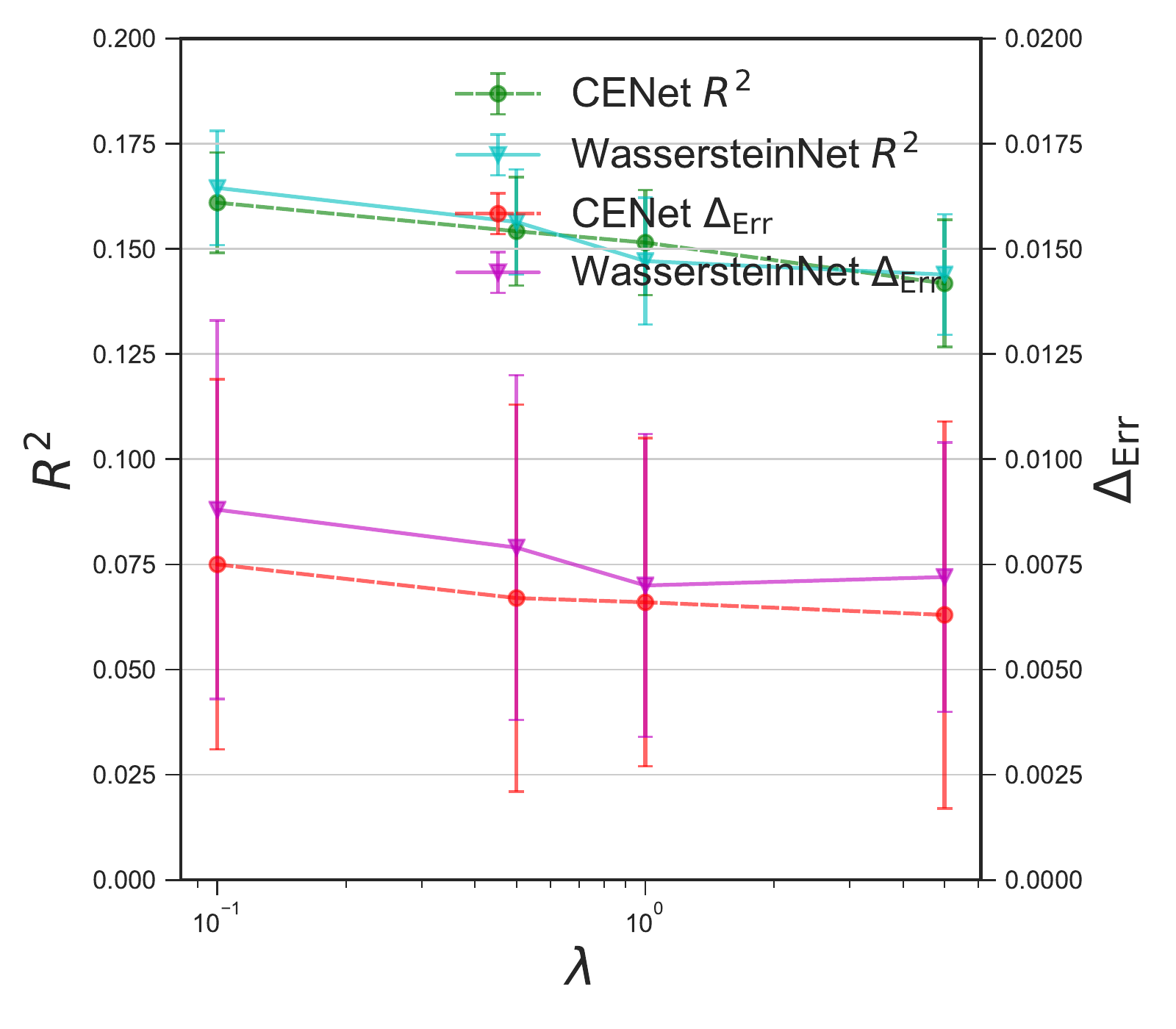}
  \caption{COMPAS}
  \label{fig:lambda-compas}
\end{subfigure}
~
\begin{subfigure}[b]{.32\linewidth}
  \centering
  \includegraphics[width=\linewidth]{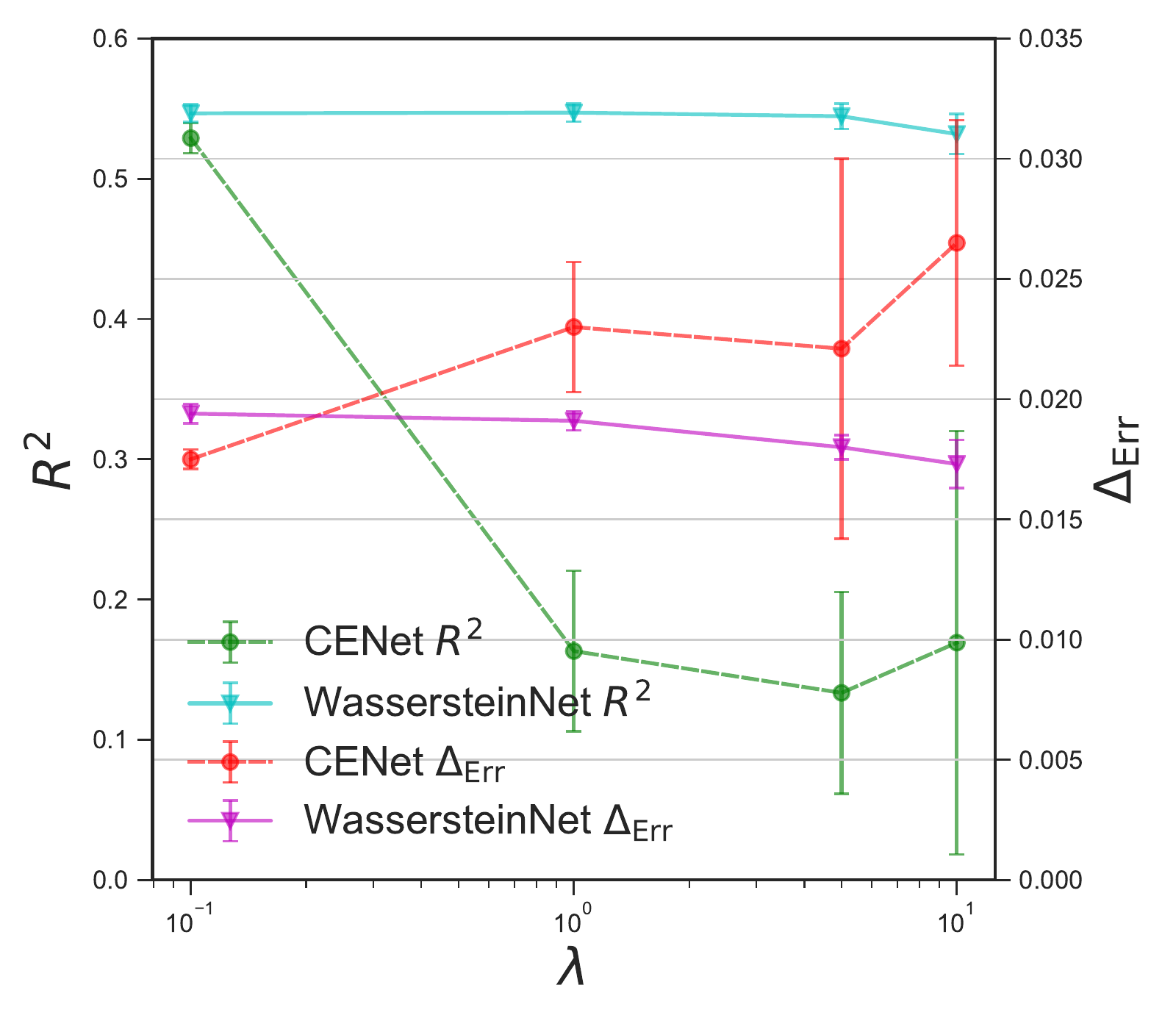}
  \caption{Crime}
  \label{fig:lambda-crime}
\end{subfigure}
~
\begin{subfigure}[b]{.32\linewidth}
  \centering
  \includegraphics[width=\linewidth]{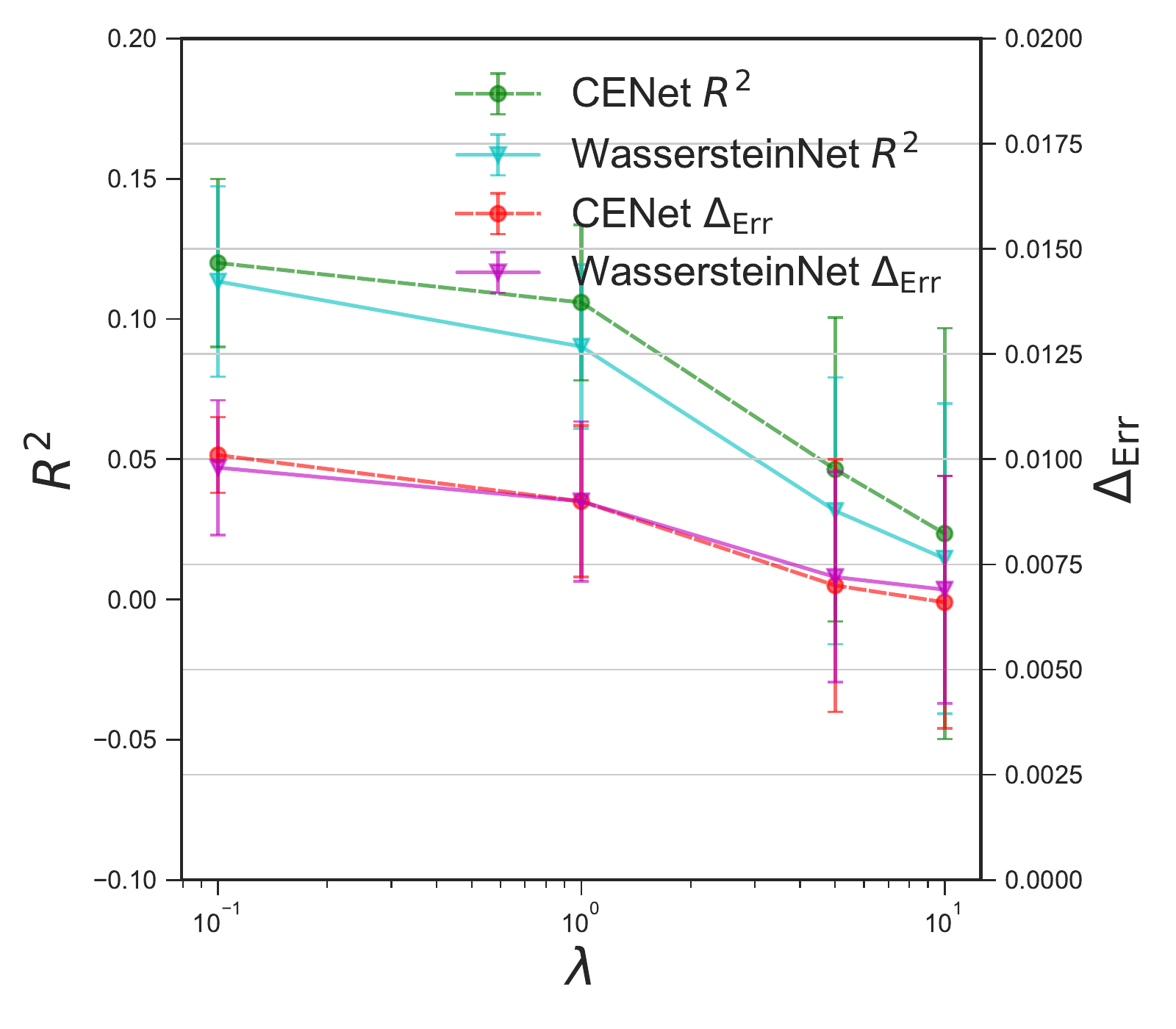}
  \caption{Law}
  \label{fig:lambda-law}
\end{subfigure}
~
\begin{subfigure}[b]{.32\linewidth}
  \centering
  \includegraphics[width=\linewidth]{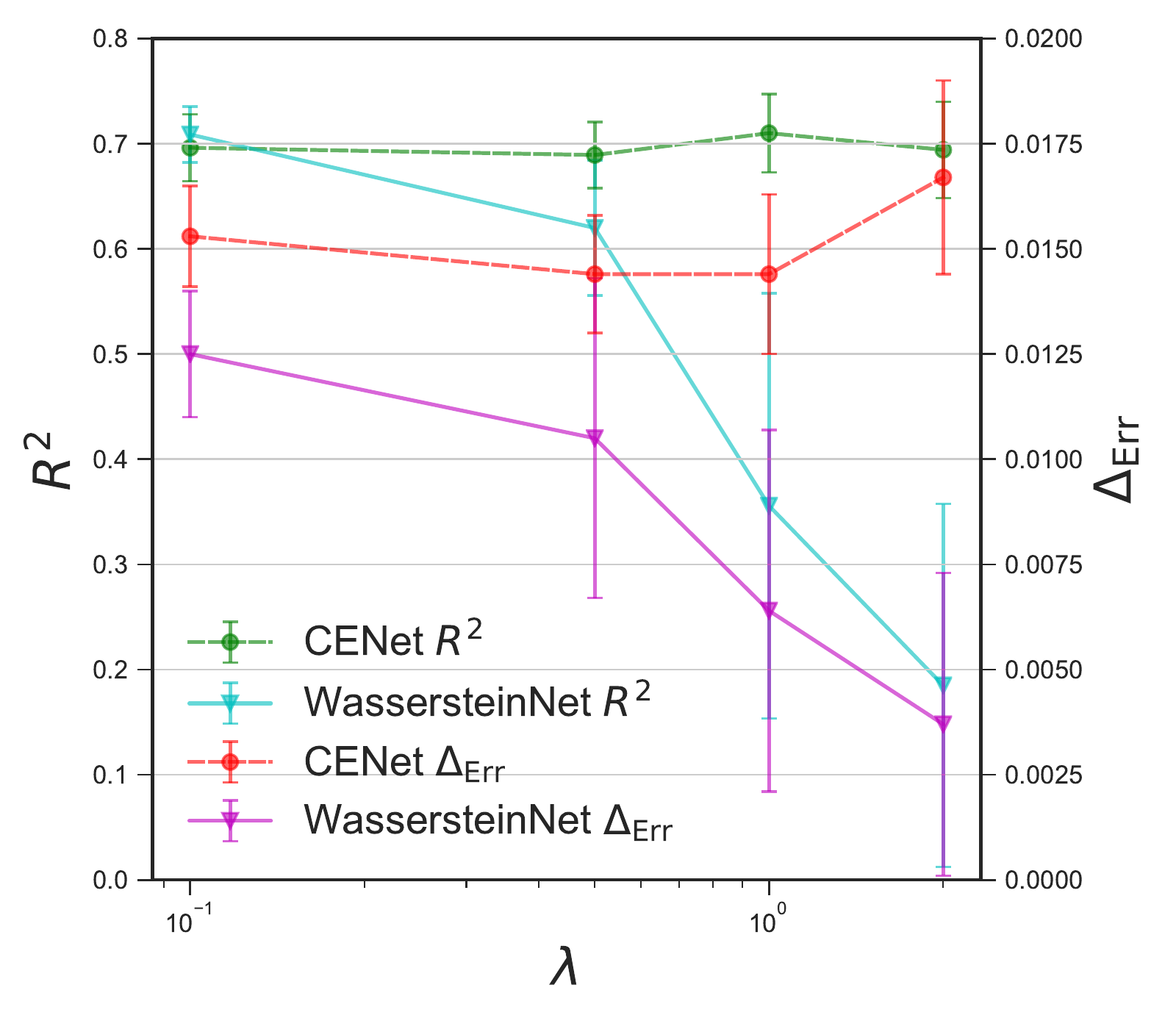}
  \caption{Insurance}
  \label{fig:lambda-insurance}
\end{subfigure}
\caption{$R^2$ regression scores and error gaps when $\lambda$ changes in \textsc{CENet} and \textsc{WassersteinNet}.
The general trend is that with the increase of $\lambda$, the error gap values and $R^2$ scores gradually decrease, except the cases where $\lambda$ increases in \textsc{CENet} in Adult, Crime and Insurance dataset. The exceptions are caused by the instability of the training processes of \textsc{CENet}~\citep{arjovsky2017towards}.
}
  \label{fig:lambda-results}
\end{figure*}

\section{Experiments}
Inspired by our theoretical results that decompose accuracy disparity into the distance between marginal label distributions and the distance between conditional representations, we propose two algorithms to mitigate it. In this section, we conduct experiments to evaluate the effectiveness of our proposed algorithms in reducing the accuracy disparity.

\subsection{Experimental Setup}

\paragraph{Datasets}
We conduct experiments on five benchmark datasets: the Adult dataset~\citep{Dua:2019}, COMPAS dataset~\citep{dieterich2016compas}, Communities and Crime dataset~\citep{Dua:2019}, Law School dataset~\citep{wightman1998lsac} and Medical Insurance Cost dataset~\citep{lantz2013machine}. All datasets contain binary sensitive attributes (\eg, male/female, white/non-white). We refer readers to Appendix~\ref{app:exp-details} for detailed descriptions of the datasets and the data pre-processing pipelines. 
Note that although the Adult and COMPAS datasets are for binary classification tasks, recent evidences~\citep{pmlr-v51-que16, muthukumar2020classification, hui2021evaluation} suggest that square loss achieves comparable performance with cross-entropy loss and hinge loss. In this regard, we take them as regression tasks with two distinctive ordinal values.

\paragraph{Methods}
We term the proposed algorithms \textsc{CENet} and \textsc{WassersteinNet} for our two proposed algorithms respectively and implement them using Pytorch~\citep{paszke2019pytorch}.\footnote{Our code is publicly available at:\\\url{https://github.com/JFChi/Understanding-and-Mitigating-Accuracy-Disparity-in-Regression}} To the best of our knowledge, no previous study aims to minimize accuracy disparity in regression using representation learning. However, there are other similar fairness notions and mitigation techniques proposed for regression and we add them as our baselines: (1) Bounded group loss (\textsc{BGL})~\citep{agarwal2019fair}, which asks for the prediction errors for any groups to remain below a predefined level $\epsilon$; (2) Coefficient of determination (\textsc{CoD})~\citep{komiyama2018nonconvex}, which asks for the coefficient of determination between the sensitive attributes and the predictions to remain below a predefined level $\epsilon$. 

For each dataset, we perform controlled experiments by fixing the regression model architectures to be the same. We train the regression models via minimizing mean squared loss. Among all methods, we vary the trade-off parameter (\ie, $\lambda$ in \textsc{CENet} and \textsc{WassersteinNet} and $\epsilon$ in \textsc{BGL} and \textsc{CoD}) and report and the corresponding $R^2$ scores and the error gap values.
For each experiment, we average the results for ten different random seeds.
Note that \textsc{CoD} cannot be implemented on the Adult dataset since the size of the Adult dataset is large and the QCQP optimization algorithm to solve \textsc{CoD} needs a quadratic memory usage of the dataset size.
We refer readers to Appendix~\ref{app:exp-details} for detailed hyper-parameter settings in our experiments and Appendix~\ref{app:add-exp-res} for additional experimental results.

\subsection{Results and Analysis}

The overall results are visualized in Figure~\ref{fig:overall-results}.
The following summarizes our observations and analyses:
(1) Our proposed methods \textsc{WassersteinNet} and \textsc{CENet} are most effective in reducing the error gap values in all datasets compared to the baselines. Our proposed methods also achieve the best trade-offs in Adult, COMPAS, Crime and Insurance datasets: with the similar error gap values ($R^2$ scores), our methods achieve the highest $R^2$ scores (lowest error gap values). In the Law dataset, the error gap values decrease with high utility losses in our proposed methods due the significant trade-offs between the predictive power of the regressors and accuracy parity. 
We suspect this is because the feature noise distribution in one group differs significantly than the others in the Law dataset.
(2) Among our proposed methods, \textsc{WassersteinNet} are more effective in reducing the error gap values while \textsc{CENet} might fail to decrease the error gaps in Adult, Crime and Insurance datasets and might even cause non-negligible reductions in the predictive performance of the regressors in Adult and Crime datasets. 
The reason behind it is that the minimax optimization in the training of \textsc{CENet} could lead to an unstable training process under the presence of a noisy approximation to the optimal discriminator~\citep{arjovsky2017towards}. We will provide more analysis in Figure~\ref{fig:lambda-results} next.
(3) Compared to our proposed methods, \textsc{BGL} and \textsc{CoD} can also decrease error gaps to a certain extent. This is because: (i) BGL aims to keep errors remaining relatively low in each group, which helps to reduce accuracy disparity; (ii) CoD aims to reduce the correlation between the sensitive attributes and the predictions (or the inputs) in the feature space, which might somehow reduce the dependency between the distributions of these two variables. 

We further analyze how the trade-off parameter $\lambda$ in the objective functions affect the performance of our methods.
Figure~\ref{fig:lambda-results} shows $R^2$ regression scores and error gaps when $\lambda$ changes in \textsc{CENet} and \textsc{WassersteinNet}. 
We see the general trend is that with the increase of the trade-off parameter $\lambda$, the error gap values and $R^2$ scores gradually decrease.
Plus, the increase of $\lambda$ generally leads to the instability of training processes with larger variances of both $R^2$ scores and error gap values. 
In Adult, Crime and Insurance datasets,  \textsc{WassersteinNet} is more effective in mitigating accuracy disparity when $\lambda$ increases, while \textsc{CENet} fails to decrease the error gap values and might suffer from significant accuracy loss. 
The failure to decrease the error gap values with  significant accuracy loss and variance indicates the estimation of total variation in minimax optimization for \textsc{CENet} could lead to a highly unstable training process~\citep{arjovsky2017towards}.

\section{Related Work}

\paragraph{Algorithmic Fairness}
In the literature, two main notions of fairness, i.e., \emph{group fairness} and \emph{individual fairness}, has been widely studied~\citep{dwork2012fairness, zemel2013learning,feldman2015certifying,zafar2017fairness-a,hardt2016equality,zafar2017fairness-b, hashimoto2018fairness, madras2019fairness}.  
In particular, \citet{chen2018my} analyzed the impact of data collection on discrimination (\eg, false positive rate, false negative rate, and zero-one loss) from the perspectives of bias-variance-noise decomposition, and they suggested collecting more training examples and collect additional variables to reduce discrimination. 
\citet{khani2019noise} argued that the loss difference among different groups is determined by the amount of latent (unobservable) feature noise and the difference between means, variances, and sizes of the groups with an assumption that there are a latent random feature and a noise feature that are involved in the generation of the observable features.
\citet{khani2020removing} further found out that spurious features from inputs can hurt accuracy and affect groups disproportionately.
\citet{zhao2019inherent} proposed an error decomposition theorem which upper bounds accuracy disparity in the classification setting by three terms: the sum of group-wise noise, the distance of marginal input distributions across groups and the discrepancy of group-wise optimal decision functions. However, their error decomposition theorem does not lead to any mitigation approaches in classification: minimizing the distance of marginal input distributions across groups does not necessarily mitigate accuracy disparity since it could possibly exacerbate the noise term and the discrepancy of group-wise optimal decision functions in the meantime. Besides, the optimal group-wise decision functions are unknown and intractable to approximate in the feature spaces, which also adds to the difficulty of applying their upper bound directly.
In comparison, our work only assumes that there is a joint distribution where all variables are sampled and precisely characterizes disparate predictive accuracy in regression in terms of the distance between marginal label distributions and the distance between conditional representations. Inspired by our theoretical results, we also propose practical algorithms to mitigate the problem when collecting more data becomes infeasible.

\paragraph{Fair Regression}
A series of works focus on fairness under the regression problems~\citep{calders2013controlling,johnson2016impartial, berk2018fairness, komiyama2018nonconvex, chzhen2020fair-b, bigot2020statistical}. 
To the best of our knowledge, no previous study aimed to minimize accuracy disparity in regression from representation learning. However, there are different fairness notions and techniques proposed for regression: \citet{agarwal2019fair} proposed fair regression with bounded group loss (\ie, it asks that the prediction error for any protected group remains below some pre-defined level) and used exponentiated-gradient approach to satisfy BGL. \citet{komiyama2018nonconvex} aimed to reduce the coefficient of determination between the sensitive attributes between the predictions to some pre-defined level and used an off-the-shelf convex optimizer to solve the problem.
\citet{mary2019fairness} used the Hirschfeld-Gebelein-R\'enyi Maximum Correlation Coefficient to generalize fairness measurement to continuous variables and ensured equalized odds (demographic parity) constraint by minimizing the  $\chi^2$ divergence between the predicted variable and the sensitive variable (conditioned on target variable).
\citet{zink2020fair} considered regression problems in health care spending and proposed five fairness criteria (\eg, covariance constraint, net compensation penalization, etc.) in the healthcare domain.
\citet{narasimhan2020pairwise} proposed pairwise fairness notions (\eg, pairwise equal opportunity requires each pair from two arbitrary different groups to be equally-likely to be ranked correctly) for ranking and regression models.
\citet{chzhen2020fair-a} studied the regression problem with demographic parity constraint and showed the optimal fair predictor is achieved in the Wasserstein barycenter of group distributions.
In contrast, we source out the root of accuracy disparity in regression through the lens of information theory and reduce it via distributional alignment using TV distance and Wasserstein distance in the minimax games.

\paragraph{Fair Representation}
A line of works focus on building algorithmic fair decision making systems using adversarial techniques to learn fair
representations~\citep{edwards2015censoring,beutel2017data,zhao2019conditional}. The main idea behind is to learn a good representation of the data so that the data owner can maximize the accuracy while removing the information related to the sensitive attribute. \citet{madras2018learning} proposed a generalized framework to learn adversarially fair and transferable representations and suggests using the label information in the adversary to learn equalized odds or equal opportunity representations in the classification setting. Apart from adversarial representation, recent work also proposed to use distance metrics, \eg, the maximum mean discrepancy~\citep{louizos2015variational} and the Wasserstein distance~\citep{jiang2019wasserstein} to remove group-related information. Prior to this work, it is not clear aligning conditional distributions via adversarial representation learning could lead to (approximate) accuracy parity. 
Our analysis is the first work to connect accuracy parity and (conditional) distributional alignment in regression and we also provide algorithm interventions to mitigate the problem where it is challenging to align conditional distributions in regression problems.

\section{Conclusion}

In this paper, we theoretically and empirically study accuracy disparity in regression problems. Specifically, we prove an information-theoretic lower bound on the joint error and a complementary upper bound on the error gap across groups to depict the feasible region of group-wise errors. Our theoretical results indicate that accuracy disparity occurs inevitably due to the marginal label distributions differ across groups. To reduce such disparity, we further propose to achieve accuracy parity by learning conditional group-invariant representations using statistical distances. 
The game-theoretic optima of the objective functions in our proposed methods are achieved when the accuracy disparity is minimized. 
Our empirical results on five benchmark datasets demonstrate that our proposed algorithms help to reduce accuracy disparity effectively. We believe our results take an important step towards better understanding accuracy disparity in machine learning models.


\section*{Acknowledgements}

We thank anonymous reviewers for their insightful feedback and suggestions. JC and YT would like to acknowledge support from NSF CNS 1823325, NSF CNS 1850479, and NSF OAC 2002985. HZ thanks the DARPA XAI project, contract \#FA87501720152, for support. GG thanks Microsoft Research for support.


\bibliography{reference}
\bibliographystyle{plainnat}


\newpage
\onecolumn
\appendix
\section*{Appendix}
\label{sec:appendix}
In the appendix, we give the proofs of the theorems and claims in our paper, the experimental details and more experimental results.

\section{Missing Proofs}
\label{app:proof}

\AccParSuff*

\begin{proof}
For $a \in \{0, 1\}$, we have
\begin{equation}
    \nonumber
    \begin{aligned}
        &~\err_{\dist_a}(h) \\
        =&~ \Exp_{\dist_a}[(h(X)-Y)^2]\\ 
        =&~ \Exp_{\dist_a}[(h(X)-\Exp_{\dist_a}(Y)+\Exp_{\dist_a}(Y)-Y)^2]\\
        =&~ \Exp_{\dist_a}[(h(X)-\Exp_{\dist_a}(Y))^2] + \Exp_{\dist_a}[(Y-\Exp_{\dist_a}(Y))^2] - 2\,\Exp_{\dist_a}[(h(X)-\Exp_{\dist_a}(Y))(Y-\Exp_{\dist_a}(Y))].\\
    \end{aligned}
\end{equation}
It is easy to see the first two terms are equal across different groups since $\Exp_{\dist_a}[Y]$, $\Exp_{\dist_a}[Y^2]$ and $h(X)$ are the same across different groups. For the third term, we have
\begin{equation}
    \nonumber
    \begin{aligned}
    &~\Exp_{\dist_a}[(h(X)-\Exp_{\dist_a}(Y))(Y-\Exp_{\dist_a}(Y))] \\
    =&~ \Exp_{\dist_a(X)}[\Exp_{\dist_a(Y\mid X)}[(h(X)-\Exp_{\dist_a}(Y))(Y-\Exp_{\dist_a}(Y))\mid X]] \\
    =&~  \Exp_{\dist_a(X)}[ (h(X)-\Exp_{\dist_a}[Y\mid X])( \Exp_{\dist_a}[Y\mid X]) -\Exp_{\dist_a}[Y\mid X]) ]\\
    =&~ 0.\\
    \end{aligned}
\end{equation}
Thus, the errors across different groups made by the constant predictor are the same if $\Exp_{\dist_a}[Y]$ and $\Exp_{\dist_a}[Y^2]$ are equivalent across different groups.
\end{proof}

\ConditionalError*
\begin{proof}
 The prediction error conditioned on $a\in\{0, 1\}$ is
    \begin{equation}
    \nonumber
    \begin{aligned}
    \err_{\dist_a} (h) &=~ \Exp[\big(Y - h(X)\big)^2 | A=a] \\
    &\geq~ \Exp^2[|Y - h(X)| | A=a] \\
    & \geq~ \big( \inf_{\Gamma(\dist_a(Y), \dist_a(h(X)))} \Exp[|Y - h(X)|] \big)^2 \\
    &=~ W_1^2(\dist_a(Y), h_\sharp\dist_a).
    \end{aligned}
    \end{equation}
    Taking square root at both sides then
completes the proof.
\end{proof}

\ErrorLowerBound*
\begin{proof}
Since $W_1(\cdot, \cdot)$ is a distance metric, the result follows immediately the triangle inequality and Lemma~\ref{lemma:w-dist}:
\begin{equation}
    \nonumber
    \begin{aligned}
        W_1(\dist_0(Y), \dist_1(Y)) &\leq  \sqrt{\err_{\dist_0} (h)} + W_1(h_\sharp\dist_0, h_\sharp\dist_1) + \sqrt{\err_{\dist_1} (h)}.
    \end{aligned}
\end{equation}
Rearrange the equation above and by AM-GM inequality, we have
\begin{equation}
    \nonumber
    \begin{aligned}
        W_1(\dist_0(Y), \dist_1(Y)) - W_1(h_\sharp\dist_0, h_\sharp\dist_1)
        \leq \sqrt{\err_{\dist_0} (h)} + \sqrt{\err_{\dist_1} (h)}
        \leq \sqrt{2(\err_{\dist_0} (h)+ \err_{\dist_1} (h))}.
    \end{aligned}
\end{equation}
Taking square at both sides then completes the proof.
\end{proof}

\WeightedErrorLowerBound*

\begin{proof}
The joint error is
\begin{equation}
    \nonumber
    \begin{aligned}
        &~\err_{\dist}(h) \\
        =&~\alpha\,\err_{\dist_0}(h) + (1-\alpha)\, \err_{\dist_1}(h)\\
        \geq&~ \min\{\alpha, 1-\alpha\} \big(\err_{\dist_0}(h) + \err_{\dist_1}(h) \big)\\
        \geq&~ \frac{1}{2} \min\{\alpha, 1-\alpha\} [\big(W_1(\dist_0(Y), \dist_1(Y)) - W_1(h_\sharp\dist_0, h_\sharp\dist_1)\big)_+]^2. &&\text{(Theorem~\ref{theorem:lower-bound})}
    \end{aligned}
\end{equation}
\end{proof}

\UpperXonY*
\begin{proof}
First, we show that for $a \in \{0, 1\}$:
\begin{equation}
    \nonumber
    \begin{aligned}
    \err_{\dist_a}(h) = \Exp_{\dist_a}[(h(X)-Y)^2]= \Exp_{\dist_a}[h^2(X)-2Yh(X)+Y^2] = \Exp_{\dist_a}[h^2(X)-2Yh(X)] + \Exp_{\dist_a}[Y^2]. \\
    \end{aligned}
\end{equation}
Next, we bound the error gap:
\begin{equation}
    \nonumber
    \begin{aligned}
    &~| \err_{\dist_0}(h) - \err_{\dist_1}(h) | \\
    =&~ |\Exp_{\dist_0}[h^2(X)-2Yh(X)] + \Exp_{\dist_0}[Y^2] - \Exp_{\dist_1}[h^2(X)-2Yh(X)] - \Exp_{\dist_1}[Y^2] | \\
    \leq&~ |\Exp_{\dist_0}[h^2(X)-2Yh(X)] -\Exp_{\dist_1}[h^2(X)-2Yh(X)]| + |\Exp_{\dist_0}[Y^2] - \Exp_{\dist_1}[Y^2]|. && \text{(Triangle inequality)}\\
    \end{aligned}
\end{equation}

For the second term, we can easily prove that 
\begin{equation}
\nonumber
|\Exp_{\dist_0}[Y^2]-\Exp_{\dist_1}[Y^2]| = |\inp{Y^2}{d\dist_0 - d\dist_1}| \leq \| Y \|^2_{\infty} \|d\dist_0 - d\dist_1\|_1  \leq 2M^2 \dtv(\dist_0(Y), \dist_1(Y)),
\end{equation}
where the second equation follows Hölder's inequality and the last equation follow the definition of total variation distance. Now it suffices to bound the remaining term:
\begin{equation}
    \nonumber
    \begin{aligned}
        &~|\Exp_{\dist_0}[h^2(X)-2Yh(X)] -\Exp_{\dist_1}[h^2(X)-2Yh(X)]|\\
        =&~ \bigg| \int h(\xx) (h(\xx)-2y)\dif\mu_0(\xx, y) -  \int h(\xx) (h(\xx)-2y)\dif\mu_1(\xx, y)  \bigg|\\
        \leq&~ \bigg| \iint h(\xx) (h(\xx)-2y)\dif\mu_0(\xx | y)d\mu_0(y) - \iint h(\xx) (h(\xx)-2y)\dif\mu_0(\xx| y)d\mu_1(y) \bigg|  && \text{(Triangle inequality)}\\
        &~+ \bigg| \iint h(\xx) (h(\xx)-2y)\dif\mu_1(\xx | y)d\mu_1(y) - \iint h(\xx) (h(\xx)-2y)\dif\mu_0(\xx| y)d\mu_1(y)  \bigg|.
    \end{aligned}
\end{equation}

We upper bound the first term: 
\begin{equation}
    \nonumber
    \begin{aligned}
    &~ \bigg| \iint h(\xx) (h(\xx)-2y)\dif\mu_0(\xx | y)\dif\mu_0(y) - \iint h(\xx) (h(\xx)-2y)\dif\mu_0(\xx| y)\dif\mu_1(y) \bigg| \\
    \leq&~ \iint \big| h(\xx) (h(\xx)-2y) (\dif\mu_0(y) - \dif\mu_1(y))\big| \dif\mu_0(\xx| y)   \\
    \leq&~ \int \big|\dif\mu_0(y) - \dif\mu_1(y)\big| \int \big|\sup_{\xx}h(\xx)\big| \big|h(\xx)-2y\big| \dif\mu_0(\xx| y)\\
    \leq&~ M \int \Exp_{\dist_0}[|h(X)-2Y||Y=y] \,\big|\dif\mu_0(y) - \dif\mu_1(y)\big| && \text{(Assumption~\ref{ass:bound})} \\
    \leq&~ 3M^2 \int \big|\dif\mu_0(y) - \dif\mu_1(y)\big| && \text{(Assumption~\ref{ass:bound})} \\
    \leq&~ 6M^2 \dtv(\dist_0(Y), \dist_1(Y)). \\
    \end{aligned}
\end{equation}

Note that the last equation follows the definition of total variation distance. For the second term, we have:
\begin{equation}
    \nonumber
    \begin{aligned}
        &~\bigg| \iint h(\xx) (h(\xx)-2y)\dif\mu_1(\xx | y)\dif\mu_1(y) - \iint h(\xx) (h(\xx)-2y)\dif\mu_0(\xx| y)\dif\mu_1(y) \bigg| \\
        \leq&~ \bigg|\iint h^2(\xx)(\dif\mu_1(\xx|y)-\dif\mu_0(\xx|y))\dif\mu_1(y) \bigg| + \bigg|\iint 2y\,h(\xx)(\dif\mu_1(\xx|y)-\dif\mu_0(\xx|y))\dif\mu_1(y) \bigg| && \text{(Triangle inequality)}\\
        \leq&~ 3M~ \Exp_{\dist_1}[|\Exp_{\dist_0^y}[\Ypred]-\Exp_{\dist_1^y}[\Ypred]|]. && \text{(Assumption~\ref{ass:bound})}
    \end{aligned}
\end{equation}

To prove the last equation, we first see that:
\begin{equation}
    \nonumber
    \begin{aligned}
        &~ \bigg|\iint h^2(\xx)(\dif\mu_1(\xx|y)-\dif\mu_0(\xx|y))\dif\mu_1(y)\bigg| \\
        \leq&~  \bigg|\iint \big(\sup_{\xx}h(\xx)\big) h(\xx)(\dif\mu_1(\xx|y)-\dif\mu_0(\xx|y)) \dif \mu_1(y)\bigg| \\ 
        \leq&~ M \int \big|\Exp_{\dist_0}[h(X)|Y=y] - \Exp_{\dist_1}[h(X)|Y=y]\big| \dif\mu_1(y) && \text{(Assumption~\ref{ass:bound})} \\
        =&~ M~\Exp_{\dist_1}[|\Exp_{\dist_0^y}[\Ypred]-\Exp_{\dist_1^y}[\Ypred]|].
    \end{aligned}
\end{equation}

Similarly, we also have:
\begin{equation}
    \nonumber
    \begin{aligned}
    &~\bigg|\iint 2y\,h(\xx)(\dif\mu_1(\xx|y)-\dif\mu_0(\xx|y))\dif\mu_1(y)\bigg| \\
    \leq&~ 2~ \bigg| \iint (\sup y) h(\xx)(\dif\mu_1(\xx|y)-\dif\mu_0(\xx|y)) \dif\mu_1(y) \bigg| \\
    \leq&\,2M \int \big|\Exp_{\dist_0}[h(X)|Y=y] - \Exp_{\dist_1}[h(X)|Y=y]\big| \,d\mu_1(y) && \text{(Assumption~\ref{ass:bound})}\\
    =&~ 2 M~\Exp_{\dist_1}[|\Exp_{\dist_0^y}[\Ypred]-\Exp_{\dist_1^y}[\Ypred]|]. \\
    \end{aligned}
\end{equation}

By symmetry, we can also see that:
\begin{equation}
    \nonumber
    \begin{aligned}
        & |\Exp_{\dist_0}[h^2(X)-2Yh(X)] -\Exp_{\dist_1}[h^2(X)-2Yh(X)]|\leq 6M^2 \dtv(\dist_0(Y), \dist_1(Y)) + 3M\,\Exp_{\dist_1}[|\Exp_{\dist_0^y}[\Ypred]-\Exp_{\dist_1^y}[\Ypred]|]. \\
    \end{aligned}
\end{equation}

Combine the above two equations yielding:
\begin{equation}
    \nonumber
    \begin{aligned}
    &~|\Exp_{\dist_0}[h^2(X)-2Yh(X)] -\Exp_{\dist_1}[h^2(X)-2Yh(X)]| \\
    \leq&~ 6M^2 \dtv(\dist_0(Y), \dist_1(Y)) + 3M\min\{ \Exp_{\dist_0}[|\Exp_{\dist_0^y}[\Ypred]-\Exp_{\dist_1^y}[\Ypred]|], \Exp_{\dist_1}[|\Exp_{\dist_0^y}[\Ypred]-\Exp_{\dist_1^y}[\Ypred]|]\}. \\
    \end{aligned}
\end{equation}

Incorporating the terms back to the upper bound of the error gap then completes the proof.
\end{proof}

\optresponse*
\begin{proof}
To prove Theorem~\ref{theorem:optresponse}, we first give Proposition~\ref{prop:limitxent}.

\begin{restatable}{proposition}{limitxent}
For any feature map $g:\xxspace\to\zzspace$, assume that $\mathcal{F}$ contains all the randomized binary classifiers and $\mathcal{F} \ni f: \zzspace \times \yyspace \to \aaspace$, then  $\min_{f\in\mathcal{F}}\crossentropy{\dist}{A}{f(g(X), Y)} = H(A\mid Z,Y)$. 
\label{prop:limitxent}
\end{restatable}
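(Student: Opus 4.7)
The plan is to expand the cross-entropy loss, condition on $(Z, Y) = (g(X), Y)$ to split it into an outer expectation of a pointwise Bernoulli cross-entropy, and then exploit the richness of $\mathcal{F}$ to optimize inside the integrand. First I would write
\begin{equation*}
\crossentropy{\dist}{A}{f(Z,Y)} = \Exp_{(Z,Y)}\!\big[-p(Z,Y)\log f(Z,Y) - (1-p(Z,Y))\log(1-f(Z,Y))\big],
\end{equation*}
where $p(z,y) \defeq \dist(A=1 \mid Z=z, Y=y)$, using the tower property together with $\Exp[A \mid Z, Y] = p(Z,Y)$ (recall $A$ is binary).

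Second, because $\mathcal{F}$ contains every randomized binary classifier, the outer minimization reduces to a pointwise optimization. For each fixed $(z, y)$ the problem becomes
\begin{equation*}
\min_{q \in [0,1]} \;\; -p(z,y)\log q - (1-p(z,y))\log(1-q),
\end{equation*}
a one-dimensional convex program whose unique minimizer is $q^\star = p(z,y)$, as follows from differentiation or equivalently from Gibbs' inequality applied to the two Bernoulli distributions. At this minimizer the integrand equals the binary entropy $h_b(p(z,y)) \defeq -p(z,y)\log p(z,y) - (1-p(z,y))\log(1-p(z,y))$.

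Finally, taking the outer expectation yields
\begin{equation*}
\min_{f\in\mathcal{F}} \crossentropy{\dist}{A}{f(Z,Y)} = \Exp_{(Z,Y)}[h_b(p(Z,Y))] = H(A \mid Z, Y),
\end{equation*}
where the last equality is the definition of conditional entropy for the binary variable $A$. The only technical subtlety, and the step I would be most careful about, is justifying the interchange of $\min$ and the outer expectation: this amounts to checking that the pointwise optimizer $f^\star(z,y) = p(z,y)$ is itself a measurable randomized classifier, and hence lies in $\mathcal{F}$, which is precisely what the assumption on $\mathcal{F}$ guarantees. As a useful byproduct, the proof identifies the Bayes-optimal discriminator as $f^\star(Z,Y) = \dist(A=1\mid Z,Y)$, which is exactly the form of the optimal $f^\star$ used later in Theorem~\ref{theorem:optresponse}.
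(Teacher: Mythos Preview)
Your proposal is correct and follows essentially the same approach as the paper. The only cosmetic difference is that the paper packages the pointwise step as the identity $\crossentropy{\dist}{A}{f} = \Exp_{Z,Y}[\kl(\dist(A\mid Z,Y)\,\|\,f(Z,Y))] + H(A\mid Z,Y)$ and invokes nonnegativity of $\kl$, whereas you optimize the Bernoulli cross-entropy directly via Gibbs' inequality; both routes identify the same minimizer $f^\star(Z,Y)=\dist(A=1\mid Z,Y)$ and the same minimum value.
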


\begin{proof}
    By the definition of cross-entropy loss, we have:
    \begin{equation}
        \nonumber
        \begin{aligned}
            \crossentropy{\dist}{A}{f} &= -\Exp_{\dist} \left[\ind(A = 0)\log(1-f(g(X),Y)) + \ind(A=1)\log(f(g(X),Y))\right] \\
            &= -\Exp_{g_\sharp\dist} \left[\ind(A = 0)\log(1-f(Z,Y)) + \ind(A=1)\log(f(Z,Y))\right] \\
            &= -\Exp_{Z,Y}\Exp_{A\mid Z,Y}\left[\ind(A = 0)\log(1-f(Z,Y)) + \ind(A=1)\log(f(Z,Y))\right] \\
            &= -\Exp_{Z,Y}\left[\dist(A = 0\mid Z,Y)\log(1-f(Z,Y)) + \dist(A=1\mid Z,Y)\log(f(Z,Y))\right] \\
            &= \Exp_{Z,Y}\left[\kl(\dist(A\mid Z,Y)~\|~ f(Z,Y))\right] + H(A\mid Z,Y) \\
            &\geq H(A\mid Z,Y), \\
        \end{aligned}
    \end{equation}
    where $\kl(\cdot\|\cdot)$ denotes the KL divergence between two distributions. From the above inequality, it is also clear that the minimum value of the cross-entropy loss is achieved when $f(Z,Y)$ equals the conditional probability $\dist(A=1\mid Z,Y)$, i.e., $ f^*(Z,Y) = \dist(A = 1\mid Z = g(X),Y)$.
\end{proof}

Proposition~\ref{prop:limitxent} states that the minimum cross-entropy loss that the discriminator can achieve is $H(A\mid Z,Y)$ when $f$ is the conditional distribution $\dist(A = 1\mid Z = g(X),Y)$. By the basic property of conditional entropy, we have:
\begin{equation}
    \nonumber
    \min_{f\in\mathcal{F}}\crossentropy{\dist}{A}{f(g(X), Y)} = H(A\mid Z,Y) = H(A\mid Y) - I(A;Z \mid Y). 
\end{equation}

Note that $H(A\mid Y)$ is a constant given the distribution $\dist$, so the maximization of $g$ is equivalent to the minimization of $\min_{Z  = g(X)}~I(A;Z \mid Y)$, and it follows that the optimal strategy for the transformation $g$ is the one that induces conditionally invariant features, e.g., $I(A;Z \mid Y) = 0$. On the other hand, if $g^*$ plays optimally, then the optimal response of the discriminator $f$ is given by 
\begin{equation*}
    f^*(Z,Y) = \dist(A = 1\mid Z = g^*(X), Y) = \dist(A = 1\mid Y).
\end{equation*}
\end{proof}

\limitwass*
\begin{proof}
By the definition of Wasstertein distance, we have:
\begin{equation}
\nonumber
\begin{aligned}
    W_1(\dist_0(Z,Y), \dist_1(Z,Y)) &= \inf_{\gamma \in \Gamma(\dist_0, \dist_1)} \int d((\zz_0,y_0), (\zz_1,y_1)) \dif\gamma((\zz_0,y_0), (\zz_1,y_1)) \\
    &= \inf_{\gamma \in \Gamma(\dist_0, \dist_1)} \iint d((\zz_0,y_0), (\zz_1,y_1)) \dif\gamma(\zz_0, \zz_1\mid y_0, y_1) \dif\gamma(y_0, y_1) \\
    &=  \inf_{\gamma \in \Gamma(\dist_0, \dist_1)} \iint \|\zz_0 - \zz_1\|_1 +|y_0 - y_1| \dif\gamma(\zz_0, \zz_1\mid y_0, y_1) \dif\gamma(y_0, y_1) \\
    &\geq \inf_{\gamma \in \Gamma(\dist_0, \dist_1)} \iint |y_0 - y_1| \dif \gamma(y_0, y_1) \dif\gamma(\zz_0, \zz_1\mid y_0, y_1) \\
    &= \inf_{\gamma \in \Gamma(\dist_0(Y), \dist_1(Y))} \int |y_0 - y_1| \dif \gamma(y_0, y_1) \\
    &= W_1 (\dist_0(Y), \dist_1(Y)). \\
\end{aligned}
\end{equation}

To finish the proof, next we prove the lower bound is achieved when $\dist_0^Y(Z=g^*(X))=\dist_1^Y(Z=g^*(X))$: it is easy to see  $W_1(\dist_0^Y(Z), \dist_0^Y(Z)) = \int \|\zz_0 - \zz_1\|_1 \dif\gamma(\zz_0, \zz_1\mid y_0, y_1) = 0$ when the conditional distributions are equal. In this case, when the Wasserstein distance is minimized, then $Z$ is conditionally independent of $A$ given $Y$ almost surely.
\end{proof}

\section{Experimental Details}
\label{app:exp-details}

\paragraph{Adult}
The Adult dataset contains 48,842 examples for income prediction. The task is to predict whether the annual income of an individual is greater or less than 50K/year based on the attributes of the individual, such as education level, age, occupation, etc. In our experiment, we use gender (binary) as the sensitive attribute. The target variable (income) is an ordinal binary variable: 0 if $<$ 50K/year otherwise 1. After data pre-processing, the dataset contains 30,162/15,060 training/test instances where the input dimension of each instance is 113. 
We show the data distributions for different demographic subgroups in Table~\ref{tab:adult-dist}.

To preprocess the dataset, we first filter out the data records that contain the missing values. We then remove the sensitive attribute from the input features and normalize the input features with its means and standard deviations. Note that we use one-hot encoding for the categorical attributes. 

For our proposed methods, we use a three-layer neural network with ReLU as the activation function of the hidden layers and the sigmoid function as the output function for the prediction task (we take the first two layers as the feature mapping). The number of neurons in the hidden layers is 60. We train the neural networks with the \textsc{Adadelta} algorithm with the learning rate 0.1 and a batch size of 512. The models are trained in 50 epochs. For the adversary networks in \textsc{CENet} and \textsc{WassersteinNet}, we use a two-layer neural network with ReLU as the activation function. The number of neurons in the hidden layers of the adversary networks is 60. The adversary network in \textsc{CENet} also uses sigmoid function as the output function. The weight clipping norm in the adversary network of \textsc{WassersteinNet} is 0.005. We use the gradient reversal layer \citep{ganin2016domain} to implement the gradient descent ascent (GDA) algorithm for optimization of the minimax problem since it makes the training process more stable \citep{daskalakis2018limit}. For the rest of the datasets we used in our experiments, we also use a gradient reversal layer to implement our algorithms.

We use the Fairlearn toolkit~\citep{bird2020fairlearn} to implement \textsc{BGL}: we use the exponentiated-gradient algorithm with the default setting as the mitigator and vary the upper bound $\epsilon\in\{0.1, 0.2, 0.3, 0.5\}$ of the bounded group loss constraint. For each value of $\epsilon$, we average the results of ten different random seeds.

\paragraph{COMPAS} The COMPAS dataset contains 6,172 instances to predict whether a criminal defendant will recidivate within two years or not. It contains attributes such as age, race, etc. In our experiment, we use race
(white or non-white) as the sensitive attribute and recidivism as the target variable. We split the dataset into train and test sets with the ratio 7/3. We show the data distributions for different demographic subgroups in Table~\ref{tab:compas-dist}.

For all methods, we use a two-layer neural network with ReLU as the activation function of the hidden layers and the sigmoid function as the output function for the prediction task (we take the first layer as the feature mapping). The number of neurons in the hidden layers is 60. We train the neural networks with the \textsc{Adadelta} algorithm with the learning rate 1.0 and a batch size of 512. The models are trained in 50 epochs. For the adversary networks in \textsc{CENet} and \textsc{WassersteinNet}, we use a two-layer neural network with ReLU as the activation function. The number of neurons in the hidden layers of the adversary networks is 10. The adversary network in \textsc{CENet} also uses sigmoid function as the output function. The weight clipping norm in the adversary network of \textsc{WassersteinNet} is 0.05. 

We use the Fairlearn toolkit to implement \textsc{BGL}: we use the exponentiated-gradient algorithm with the default setting as the mitigator and vary the upper bound $\epsilon\in\{0.1, 0.2, 0.3, 0.5\}$ of the bounded group loss constraint. For each value of $\epsilon$, we average the results of ten different random seeds.

As for \textsc{CoD}, we follow the source implementation.\footnote{https://github.com/jkomiyama/fairregresion} We use the same hyper-parameter settings as~\citep{komiyama2018nonconvex}: We use the kernelized optimization with the random Fourier features and the RBF kernel (we vary hyper-parameter of the RBF kernel $\gamma\in\{0.1, 1.0, 10, 100\}$) and report the best results with minimal MSE loss for each time we change the fairness budget $\epsilon$.  We also vary $\epsilon\in\{0.01, 0.1, 0.5, 1.0\}$ and average the results of ten different random seeds.

\begin{table}[htb]
    \centering
    \begin{minipage}[c]{.49\linewidth}
    \centering
    \captionsetup{justification=centering}
    \caption{Data distribution of $Y$ and\\ $A$ in Adult dataset.}
    \begin{tabular}{ccc}\toprule
         &  $Y = 0$ & $Y = 1$\\\midrule
    $A = 0$ & 20988 & 9539 \\
    $A = 1$ & 13026 & 1669 \\\bottomrule
    \label{tab:adult-dist}
    \end{tabular}
    \end{minipage}
    \begin{minipage}[c]{.49\linewidth}
    \centering
    \captionsetup{justification=centering}
    \caption{Data distribution of $Y$ and\\ $A$ in COMPAS dataset.}
    \begin{tabular}{ccc}\toprule
         &  $Y = 0$ & $Y = 1$\\\midrule
    $A = 0$ & 1849 & 1148 \\
    $A = 1$ & 1514 & 1661 \\\bottomrule
    \label{tab:compas-dist}
    \end{tabular}
    \end{minipage}
    
\end{table}

\paragraph{Communities and Crime}
The Communities and Crime dataset contains 1,994 examples of socio-economic, law enforcement, and crime data about communities in the United States. The task is to predict the number of violent crimes per 100K population. All attributes in the dataset have been curated and normalized to $[0, 1]$. In our experiment, we use race (binary) as the sensitive attribute: 1 if the population percentage of the white is greater or equal to 80\% otherwise 0. After data pre-processing, the dataset contains 1,595/399 training/test instances where the input dimension of each instance is 96. 
We visualize the data distributions for different demographic subgroups in Figure~\ref{fig:data-dist-crime}.

To preprocess the dataset, we first remove the non-predictive attributes and sensitive attributes from the input features. Note that all features in the dataset have already been normalized in $[0, 1]$ so that we do not perform additional normalization to the features. We then replace the missing values with the mean values of the corresponding attributes.

For all methods, we use a two-layer neural network with ReLU as the activation function of the hidden layers and the sigmoid function as the output function for the prediction task (we take the first layer as the feature mapping). The number of neurons in the hidden layers is 50.  We train the neural networks with the \textsc{Adadelta} algorithm with the learning rate 0.1 and a batch size of 256. The models are trained in 100 epochs. For the adversary networks in \textsc{CENet} and \textsc{WassersteinNet}, we use a two-layer neural network with ReLU as the activation function. The number of neurons in the hidden layers of the adversary networks is 100. The adversary network in \textsc{CENet} also uses sigmoid function as the output function. The weight clipping norm in the adversary network of \textsc{WassersteinNet} is 0.002. 

We use the Fairlearn toolkit to implement \textsc{BGL}: we use the exponentiated-gradient algorithm with the default setting as the mitigator and vary the upper bound $\epsilon\in\{0.01, 0.02, 0.03, 0.05\}$ of the bounded group loss constraint. For each value of $\epsilon$, we average the results of ten different random seeds. Note that our experiment setup is different from~\citep{agarwal2019fair}, so our results cannot be directly compared to theirs.

As for \textsc{CoD}, we follow the same hyper-parameter settings as~\citep{komiyama2018nonconvex}: We use the kernelized optimization with the random Fourier features and the RBF kernel (we vary hyper-parameter of the RBF kernel $\gamma\in\{0.1, 1.0, 10, 100\}$) and report the best results with minimal MSE loss for each time we change the fairness budget $\epsilon$. The hyper-parameter settings follow from~\citep{komiyama2018nonconvex}. We also vary $\epsilon\in\{0.01, 0.1, 0.5, 1.0\}$ and average the results of ten different random seeds. Note that our experiment setup is different from~\citep{komiyama2018nonconvex}, so our results cannot be directly compared to theirs.

\begin{figure*}[!ht]
\centering
  \begin{subfigure}[b]{0.45\textwidth}
    \includegraphics[width=\textwidth]{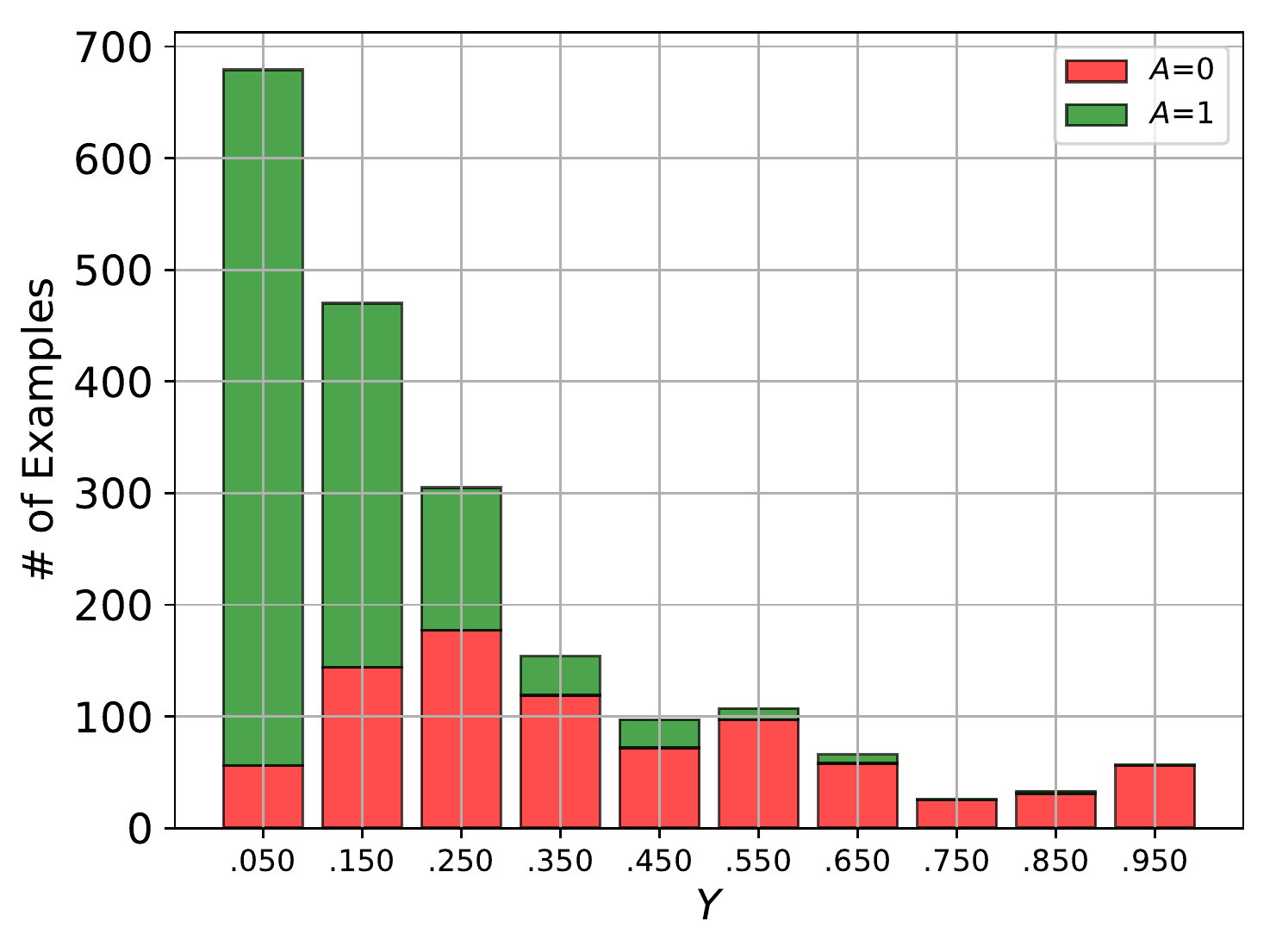}
    \caption{Communities and Crime Dataset}
    \label{fig:data-dist-crime}
  \end{subfigure}
  \begin{subfigure}[b]{0.45\textwidth}
    \includegraphics[width=\textwidth]{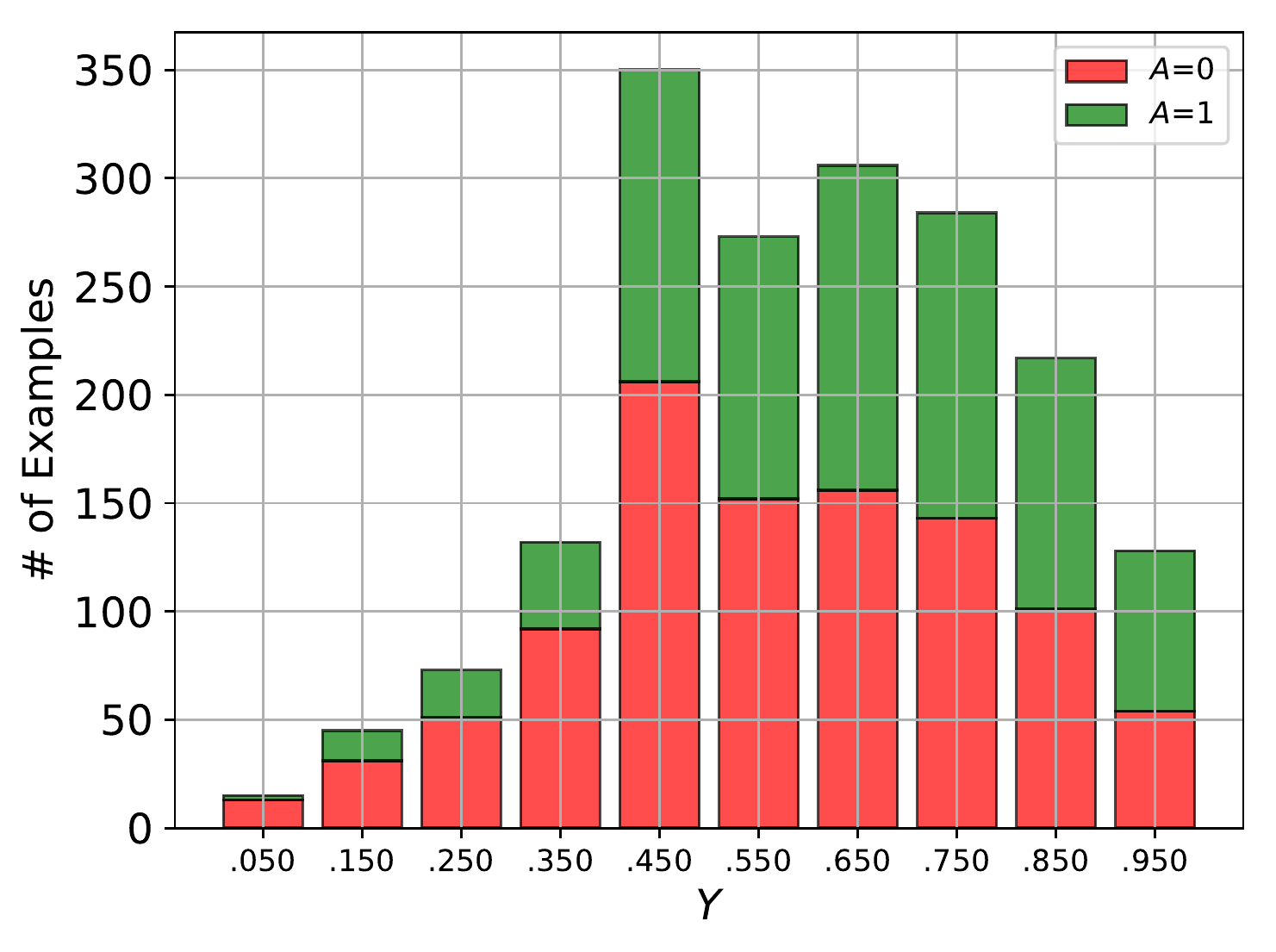}
    \caption{Law School Dataset}
    \label{fig:data-dist-law}
  \end{subfigure}
  \begin{subfigure}[b]{0.45\textwidth}
    \includegraphics[width=\textwidth]{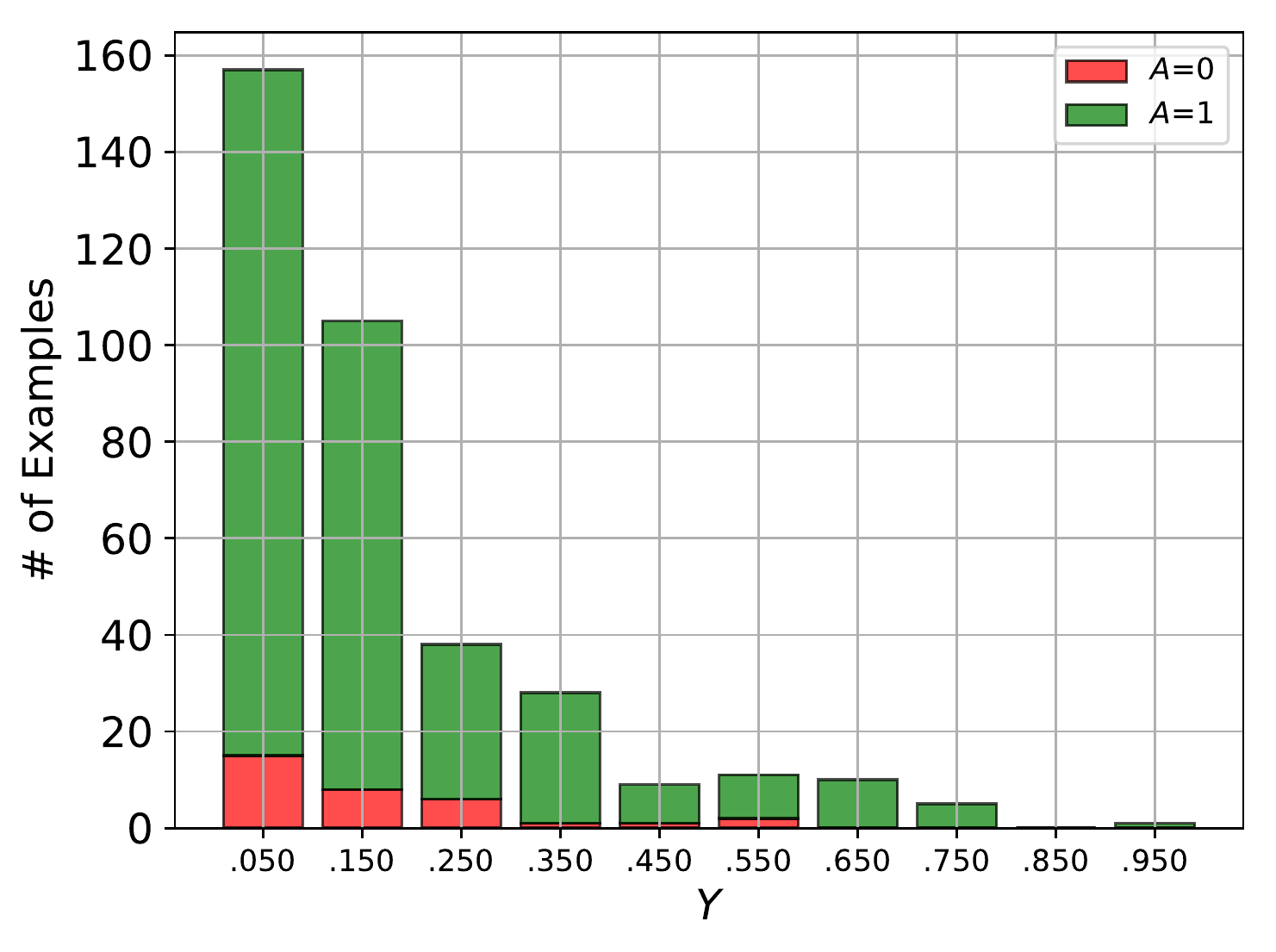}
    \caption{Medical Insurance Cost Dataset}
    \label{fig:data-dist-insurance}
  \end{subfigure}
  \caption{Data distributions for different demographic subgroups in three datasets.}
  \label{fig:data-dist}
\end{figure*}

\paragraph{Law School} The Law School dataset contains 1,823 records for law students who took the bar passage study for Law School Admission\footnote{We use the edited public version of the dataset which can be download here: \url{https://github.com/algowatchpenn/GerryFair/blob/master/dataset/lawschool.csv}}. The features in the dataset include variables such as undergraduate GPA, LSAT score, full-time status, family income, gender, etc. In our experiment, we use gender as the sensitive attribute and undergraduate GPA as the target variable. We split the dataset into train and test sets with the ratio 8/2. We show the data distributions for different demographic subgroups in Figure~\ref{fig:data-dist-law}.

For all methods, we use a two-layer neural network with ReLU as the activation function of the hidden layers and the sigmoid function as the output function for the prediction task (we take the first layer as the feature mapping). The number of neurons in the hidden layers is 10. We train the neural networks with the \textsc{Adadelta} algorithm with the learning rate 0.1 and a batch size of 256. The models are trained in 100 epochs. For the adversary networks in \textsc{CENet} and \textsc{WassersteinNet}, we use a two-layer neural network with ReLU as the activation function. The number of neurons in the hidden layers of the adversary networks is 10. The adversary network in \textsc{CENet} also uses sigmoid function as the output function. The weight clipping norm in the adversary network of \textsc{WassersteinNet} is 0.2. 

We use the Fairlearn toolkit to implement \textsc{BGL}: we use the exponentiated-gradient algorithm with the default setting as the mitigator and vary the upper bound $\epsilon\in\{0.01, 0.02, 0.03, 0.05\}$ of the bounded group loss constraint. For each value of $\epsilon$, we average the results of ten different random seeds. Note that our experiment setup is different from~\citep{agarwal2019fair}, so our results cannot be directly compared to theirs.

As for \textsc{CoD}, we follow the same hyper-parameter settings as~\citep{komiyama2018nonconvex}: We use the kernelized optimization with the random Fourier features and the RBF kernel (we vary hyper-parameter of the RBF kernel $\gamma\in\{0.1, 1.0, 10, 100\}$) and report the best results with minimal MSE loss for each time we change the fairness budget $\epsilon$. The hyper-parameter settings follow from~\citep{komiyama2018nonconvex}. We also vary $\epsilon\in\{0.01, 0.1, 0.5, 1.0\}$ and average the results of ten different random seeds. Note that our experiment setup is different from~\citep{komiyama2018nonconvex}, so our results cannot be directly compared to theirs.

\paragraph{Medical Insurance Cost} The medical insurance cost dataset~\citep{lantz2013machine} is a simulated dataset which was created using real-world demographic statistics from the U.S. Census Bureau.\footnote{We download the public version of data here: \url{https://www.kaggle.com/mirichoi0218/insurance}} The dataset reflect approximately reflect real-world conditions and has been used in the research of regression~\citep{panay2019predicting, hittmeir2019utility, pan2020implicit}. 
It contains 1,338 medical expense examples for patients in the United States, with features such as gender, age, BMI, etc., indicating characteristics of the patient and total annual medical expenses charged to the patients. In our experiment, we use gender as the sensitive attribute and the charged medical expenses as the target variable. In order to reflect the real-world scenarios where the accuracy disparity is significant due to the small and imbalanced dataset, we sub-sample the dataset: we randomly subsample 5\% of examples with gender as male and 50\% of examples with gender as female. After sub-sampling, we get 364 examples in total (33 male examples and 331 female examples). We split the dataset into train and test sets with the ratio 7/3. We visualize the data distributions for different demographic subgroups in Figure~\ref{fig:data-dist-insurance}. 

For all methods, we use a two-layer neural network with ReLU as the activation function of the hidden layers and the sigmoid function as the output function for the prediction task (we take the first layer as the feature mapping). The number of neurons in the hidden layers is 7. We train the neural networks with the \textsc{SGD} algorithm with the learning rate 0.1 and a batch size of 64. The models are trained in 750 epochs. For the adversary networks in \textsc{CENet} and \textsc{WassersteinNet}, we use a two-layer neural network with ReLU as the activation function. The number of neurons in the hidden layers of the adversary networks is 7. The adversary network in \textsc{CENet} also uses sigmoid function as the output function. The weight clipping norm in the adversary network of \textsc{WassersteinNet} is 0.2. 

We use the Fairlearn toolkit to implement \textsc{BGL}: we use the exponentiated-gradient algorithm with the default setting as the mitigator and vary the upper bound $\epsilon\in\{0.01, 0.1, 0.5, 1.0\}$ of the bounded group loss constraint. For each value of $\epsilon$, we average the results of ten different random seeds.

As for \textsc{CoD}, we follow the same hyper-parameter settings as~\citep{komiyama2018nonconvex}: We use the kernelized optimization with the random Fourier features and the RBF kernel (we vary hyper-parameter of the RBF kernel $\gamma\in\{0.1, 1.0, 10, 100\}$) and report the best results with minimal MSE loss for each time we change the fairness budget $\epsilon$. The hyper-parameter settings follow from~\citep{komiyama2018nonconvex}. We also vary $\epsilon\in\{0.01, 0.1, 0.5, 1.0\}$ and average the results of ten different random seeds.


\section{Additional Experimental Results and Analysis}
\label{app:add-exp-res}

In this section, we provide additional experimental results and analysis.

\subsection{Classification Accuracy vs. Error Gaps in Adult and COMPAS Datasets}

\begin{figure*}[!ht]
\centering
\begin{subfigure}[b]{.40\linewidth}
  \centering
  \includegraphics[width=\linewidth]{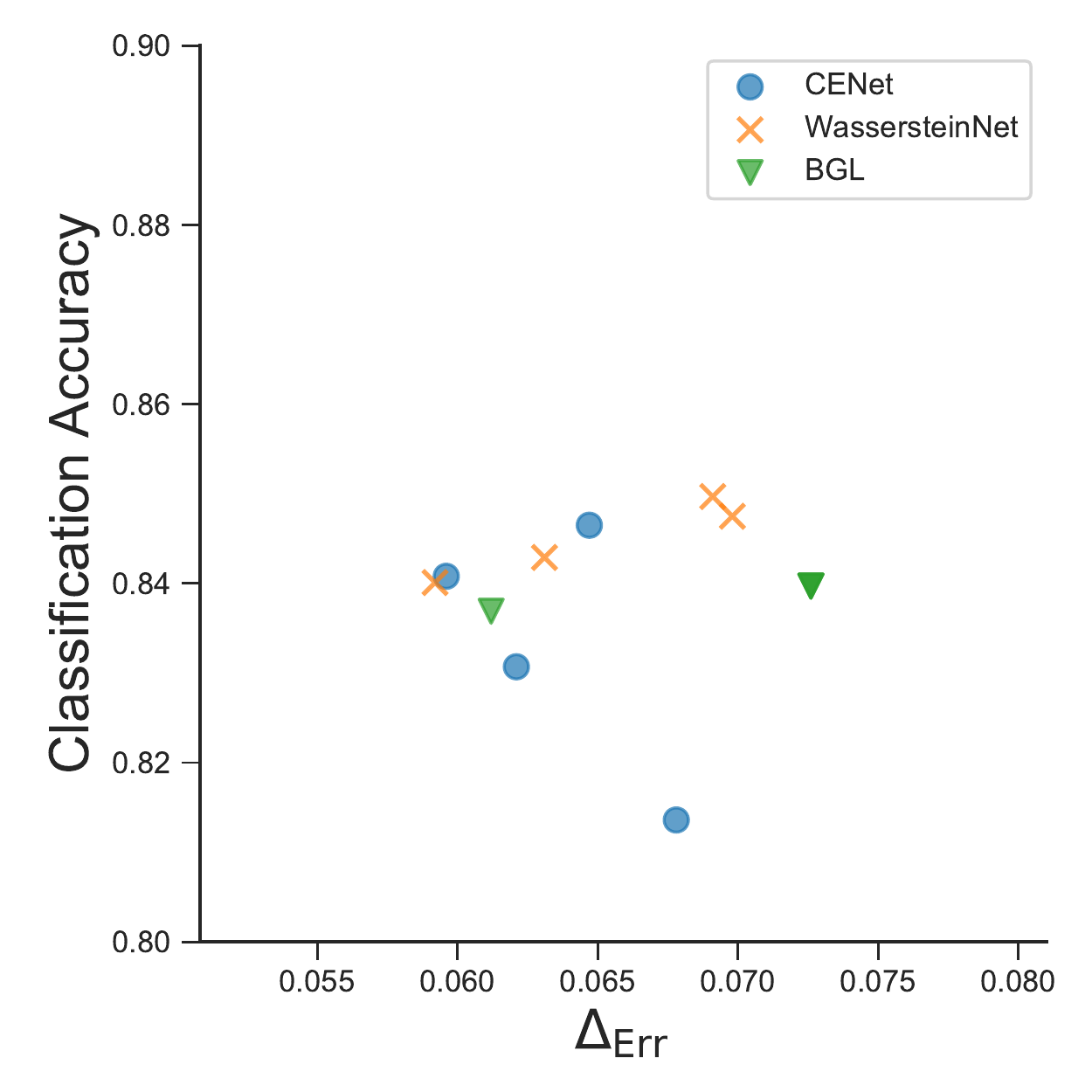}
    \caption{Adult}
  \label{fig:result-cla-adult}
\end{subfigure}
~
\begin{subfigure}[b]{.40\linewidth}
  \centering
  \includegraphics[width=\linewidth]{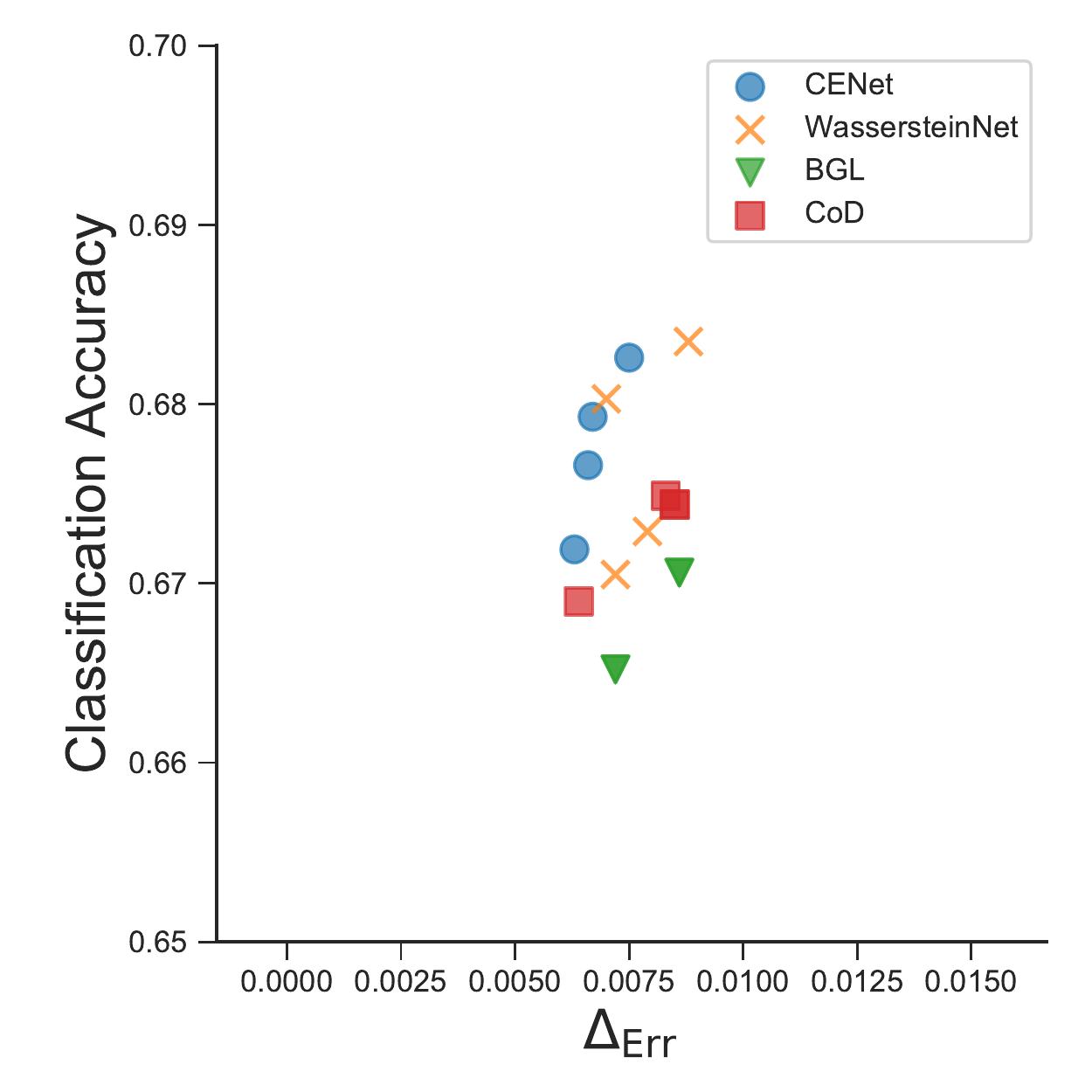}
  \caption{COMPAS}
  \label{fig:result-cla-compas}
\end{subfigure}
  \caption{Classification accuracy and error gaps of different methods in Adult and COMPAS datasets.}
  \label{fig:classifcation-results}
\end{figure*}

We also report the corresponding classification accuracy for Adult and COMPAS datasets here. In Figure~\ref{fig:classifcation-results}, we can see that our proposed methods achieve the best trade-offs in terms of classification accuracies and error gap values.

\subsection{Impact of Fairness Trade-off in the Baseline Methods}

We present additional experimental results and analyses to gain more insights into how the fairness trade-off parameters (\eg, $\epsilon$) affect the performance of the model predictive performance and accuracy disparity in baseline methods.

\begin{table}[!htb]
\centering
\caption{
$R^2$ regression scores and error gaps when $\epsilon$ changes in \textsc{BGL}.
}
\begin{tabular}{c|lcccc} 
\hline
\multirow{3}{*}{Adult} & $\epsilon$ & 0.1 & 0.2 & 0.3 & 0.5 \\ 
\cline{2-6}
 & $R^2$ & 0.3508 & 0.3696 & 0.3696 & 0.3696 \\ 
 & $\errgap$ & 0.0612 & 0.0726 & 0.0726 & 0.0726 \\ 
\hline
\hline
\multirow{3}{*}{COMPAS} & $\epsilon$ & 0.1 & 0.2 & 0.3 & 0.5 \\ 
\cline{2-6}
 & $R^2$ & 0.1478 & 0.1478 & 0.1507 & 0.1507 \\ 
\cline{2-6}
 & $\errgap$ & 0.0072 & 0.0072 & 0.0086 & 0.0086 \\ 
\hline
\hline
\multirow{3}{*}{Crime} & $\epsilon$ & 0.01 & 0.02 & 0.03 & 0.05 \\ 
\cline{2-6}
 & $R^2$ & 0.3922 & 0.3922 & 0.5380 & 0.5380 \\ 
\cline{2-6}
 & $\errgap$ & 0.0189 & 0.0189 & 0.0238 & 0.0238 \\ 
\hline
\hline
\multirow{3}{*}{Law} & $\epsilon$ & 0.01 & 0.02 & 0.03 & 0.05 \\ 
\cline{2-6}
 & $R^2$ & 0.1407 & 0.1407 & 0.1407 & 0.1412 \\ 
\cline{2-6}
 & $\errgap$ & 0.0094 & 0.0094 & 0.0094 & 0.0101 \\
\hline
\hline
\multirow{3}{*}{Insurance} & $\epsilon$ & 0.0001 & 0.01 & 0.05 & 0.1 \\ 
\cline{2-6}
 & $R^2$ & 0.6804 & 0.6855 & 0.6855 & 0.6855 \\ 
\cline{2-6}
 & $\errgap$ & 0.0145 & 0.0144 & 0.0144 & 0.0144 \\
\hline
\end{tabular}
\label{tab:epsilon-bgl-result}
\end{table}

Table~\ref{tab:epsilon-bgl-result} shows $R^2$ regression scores and error gaps when $\epsilon$ changes in \textsc{BGL}. We see that with the decrease of the trade-off parameter $\epsilon$, both the values of $R^2$ and error gaps decrease. This is because when the upper bound of $\epsilon$ in \textsc{BGL} is small, the accuracy disparity is also mitigated. When $\epsilon$ is above/below a certain threshold, $R^2$ scores and error gap values then increase/decrease. 

\begin{table}[!htb]
\centering
\caption{
$R^2$ regression scores and error gaps when $\epsilon$ changes in \textsc{CoD}.
}
\begin{tabular}{c|lcccc} 
\hline
\multirow{3}{*}{COMPAS} & $\epsilon$ & 0.01 & 0.1 & 0.5 & 1.0 \\ 
\cline{2-6}
 & $R^2$ & 0.1033 & 0.1144 & 0.1146 & 0.1146\\ 
\cline{2-6}
 & $\errgap$ & 0.0064 & 0.0083 & 0.0085 & 0.0085 \\ 
\hline
\hline
\multirow{3}{*}{Crime} & $\epsilon$ & 0.01 & 0.1 & 0.5 & 1.0 \\ 
\cline{2-6}
 & $R^2$ & 0.1262 & 0.3284 & 0.3603 & 0.3603 \\ 
\cline{2-6}
 & $\errgap$ & 0.0312 & 0.0307 & 0.0343 & 0.0343 \\ 
\hline
\hline
\multirow{3}{*}{Law} & $\epsilon$ & 0.01 & 0.1 & 0.5 & 1.0 \\ 
\cline{2-6}
 & $R^2$ & 0.1262 & 0.3284 & 0.3606 & 0.3603 \\ 
\cline{2-6}
 & $\errgap$ & 0.0312 & 0.0307 & 0.0343 & 0.0343 \\
\hline
\hline
\multirow{3}{*}{Insurance} & $\epsilon$ & 0.01 & 0.1 & 0.5 & 1.0 \\ 
\cline{2-6}
 & $R^2$ & 0.2711 & 0.2691 & 0.2689 & 0.2689 \\ 
\cline{2-6}
 & $\errgap$ & 0.0203 & 0.0210 & 0.0211 & 0.0211 \\
\hline
\end{tabular}
\label{tab:epsilon-cod-result}
\end{table}

Table~\ref{tab:epsilon-cod-result} shows $R^2$ regression scores and error gaps when $\epsilon$ changes in \textsc{CoD}. We see that with the decrease of the trade-off parameter $\epsilon$, both the values of $R^2$ and error gaps decrease in general. 

\subsection{Visualization of Training Processes}

We visualize the training processes of our proposed methods \textsc{CENet} and \textsc{WassersteinNet} in the Adult dataset and COMPAS dataset in Figure~\ref{fig:adult-training-vis} and Figure~\ref{fig:compas-training-vis}, respectively. We also compare their training dynamics with the model performance when we solely minimize the MSE loss (\ie, $\lambda=0$) and we term it as \textsc{No Debias}. 

\begin{figure}[!htb]
\centering
\vspace{-4mm}
\begin{subfigure}[b]{.48\linewidth}
  \centering
  \includegraphics[width=\linewidth]{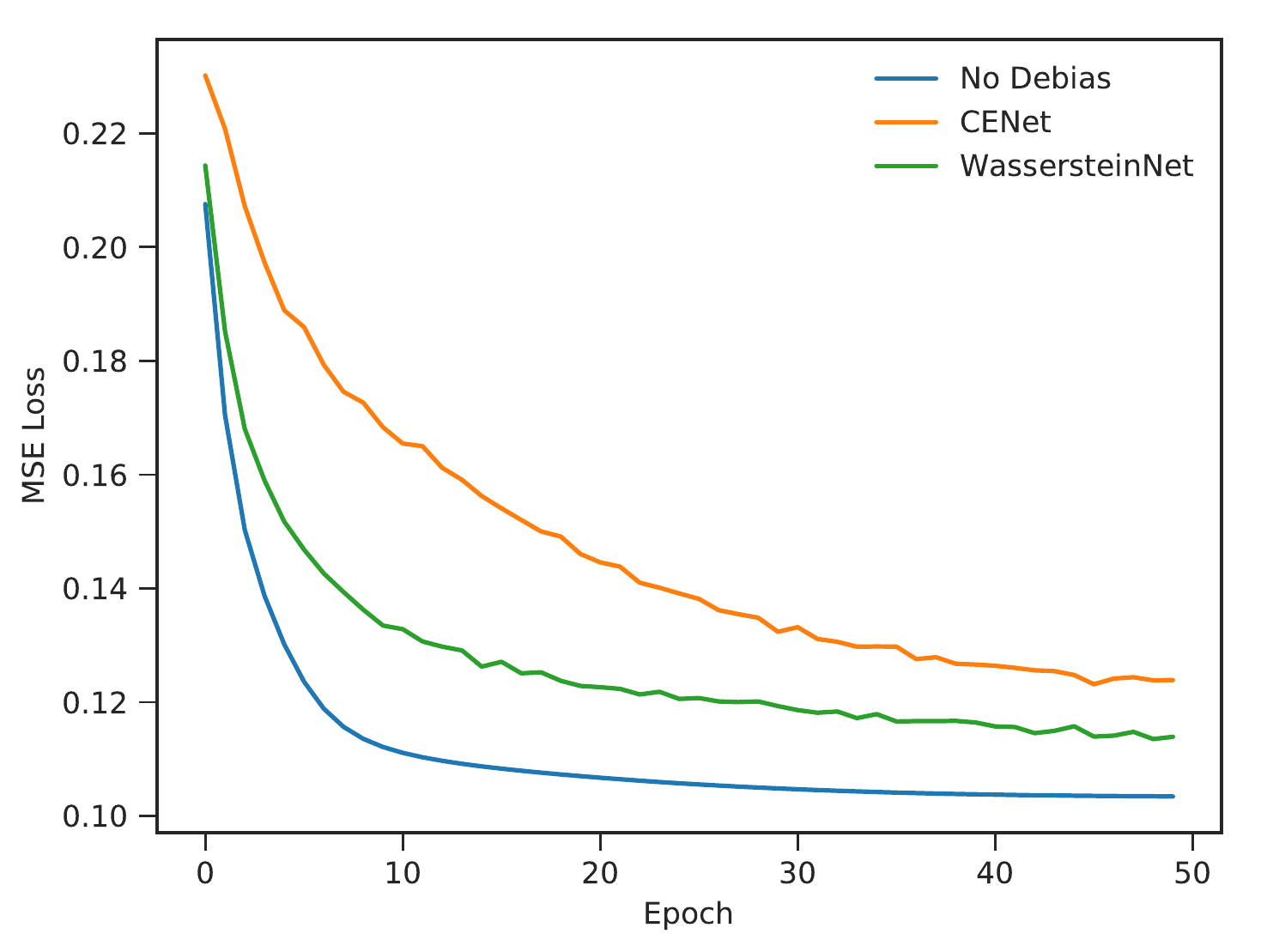}
    \caption{MSE Loss}
  \label{fig:adult-mse}
\end{subfigure}
~
\begin{subfigure}[b]{.48\linewidth}
  \centering
  \includegraphics[width=\linewidth]{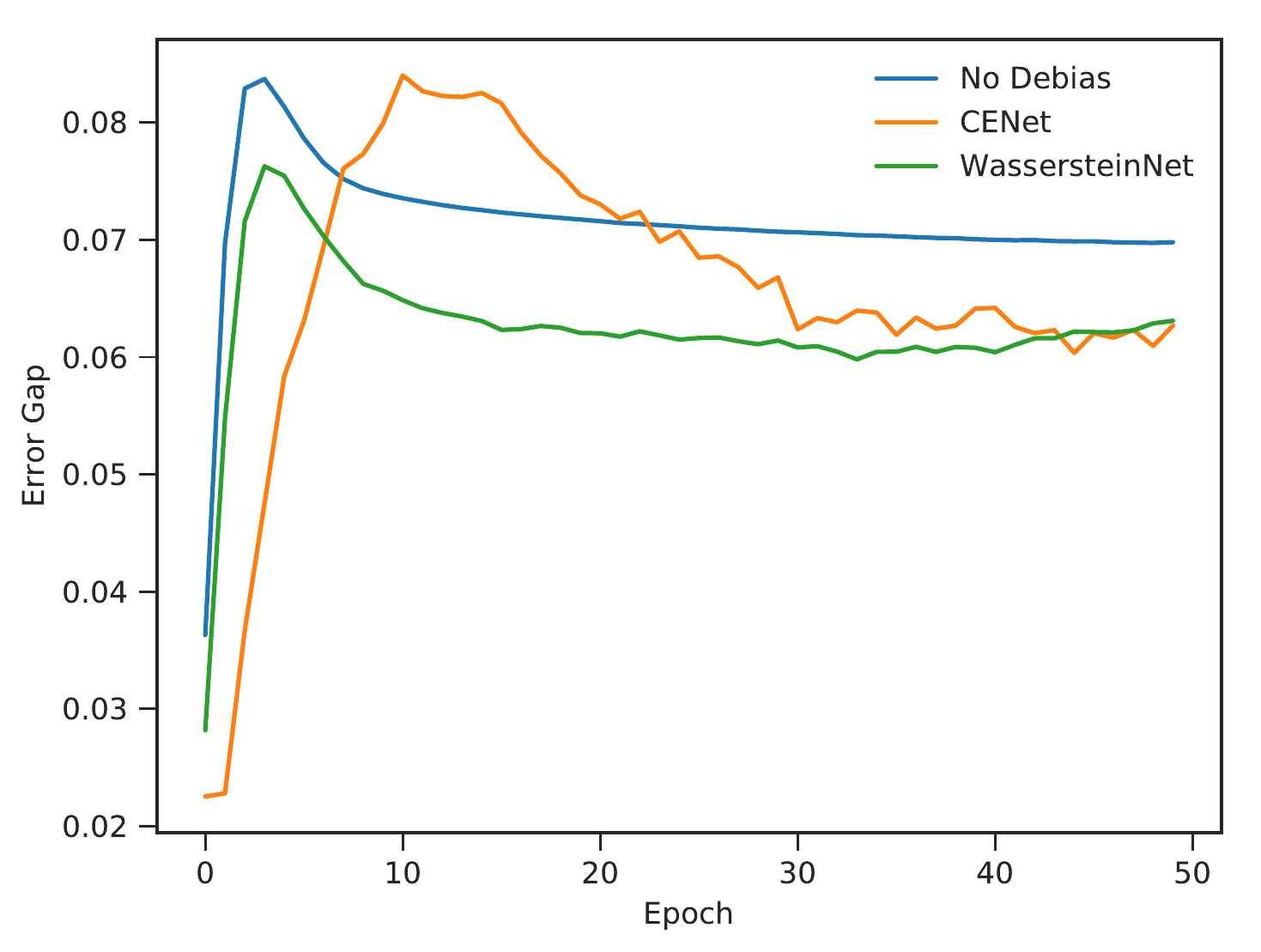}
  \caption{Error Gap}
  \label{fig:adult-errgap}
\end{subfigure}
\caption{Training visualization of \textsc{CENet}, \textsc{WassersteinNet} ($\lambda=50$) and \textsc{No Debias} ($\lambda=0$) in the Adult dataset.}
 \label{fig:adult-training-vis}
\end{figure}

\begin{figure}[!htb]
\centering
\begin{subfigure}[b]{.48\linewidth}
  \centering
  \includegraphics[width=\linewidth]{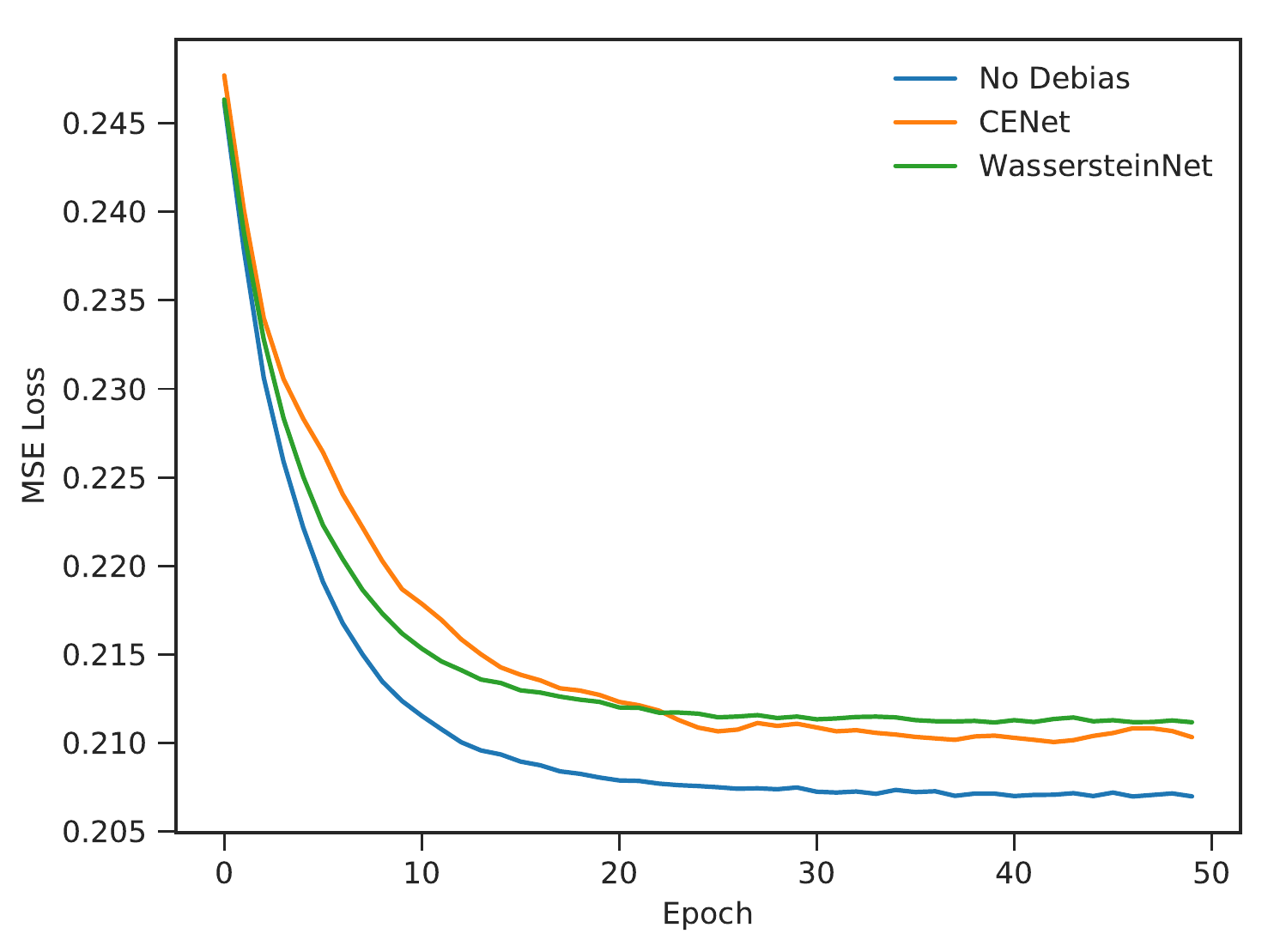}
    \caption{MSE Loss}
  \label{fig:compas-mse}
\end{subfigure}
~
\begin{subfigure}[b]{.48\linewidth}
  \centering
  \includegraphics[width=\linewidth]{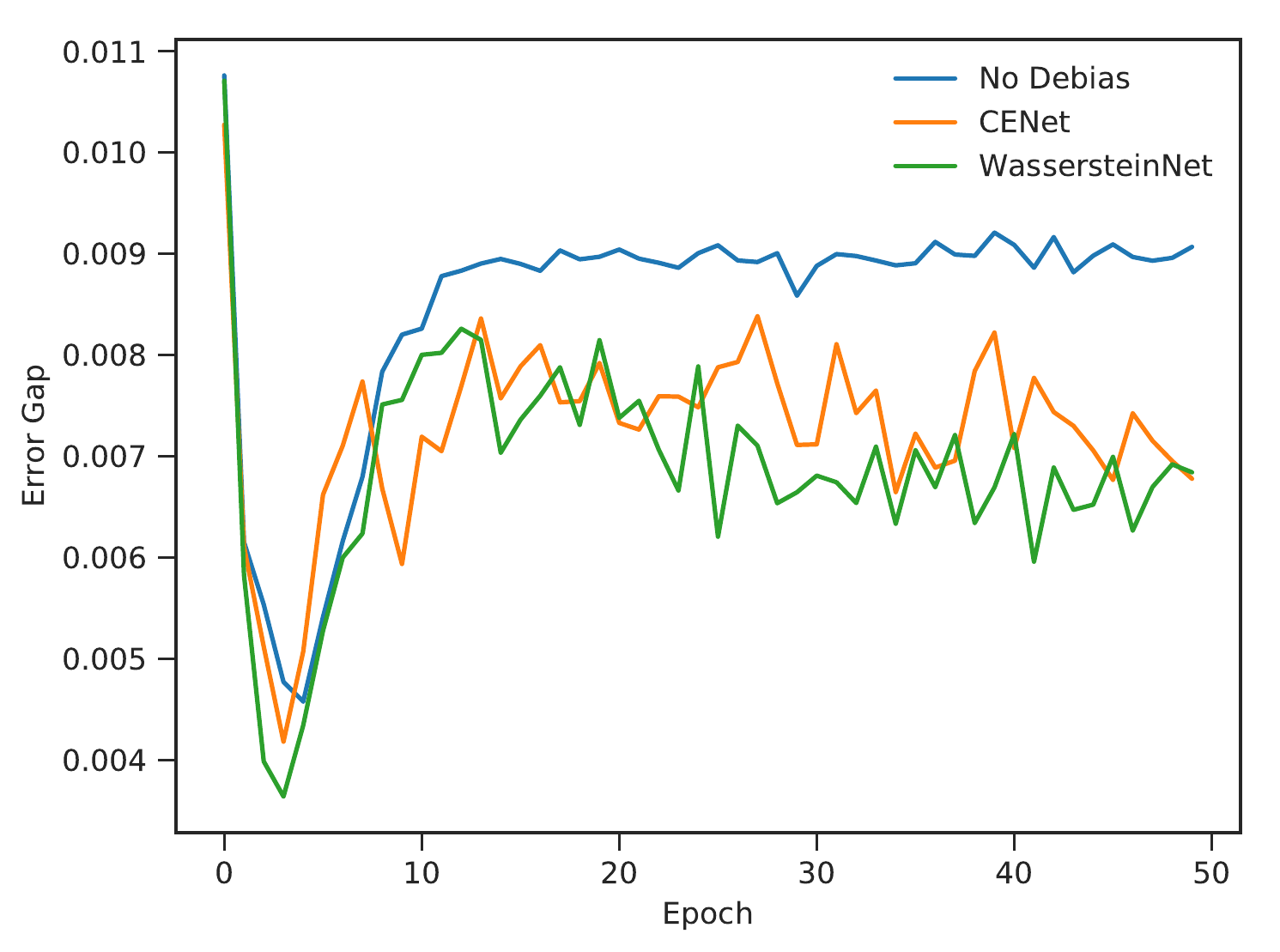}
  \caption{Error Gap}
  \label{fig:compas-errgap}
\end{subfigure}
\caption{Training visualization of \textsc{CENet}, \textsc{WassersteinNet} ($\lambda=5$) and \textsc{No Debias} ($\lambda=0$) in the COMPAS dataset.}
  \vspace{-4mm}
  \label{fig:compas-training-vis}
\end{figure}

In Figure~\ref{fig:adult-training-vis} and Figure~\ref{fig:compas-training-vis}, we can see that as the training progresses go on, the MSE losses in both datasets are decreasing and finally converge. However, the training dynamics of error gaps are much more complex even in the \textsc{No Debias} case. Before convergence, the training dynamics of error gaps differs among different datasets. Our methods enforce the models to converge to the points where error gap are smaller while preserving the models' predictive performance. It is also worth to note that minimax optimization makes the training processes somehow unstable, especially when training \textsc{CENet}.


\end{document}